\documentclass{article}

\usepackage[preprint]{neurips_2026}

\usepackage[utf8]{inputenc} % allow utf-8 input
\usepackage[T1]{fontenc}    % use 8-bit T1 fonts
\usepackage{hyperref}       % hyperlinks
\usepackage{url}            % simple URL typesetting
\usepackage{booktabs}       % professional-quality tables
\usepackage{amsfonts}       % blackboard math symbols
\usepackage{nicefrac}       % compact symbols for 1/2, etc.
\usepackage{microtype}      % microtypography
\usepackage{xcolor}         % colors
\usepackage{graphicx}
\usepackage{algorithm}
\usepackage{algorithmic}
\usepackage{subcaption}

\usepackage{amsmath}
\usepackage{amssymb}
\usepackage{bbm}
\usepackage{mathtools}
\usepackage{amsthm}
\usepackage{thmtools}
\usepackage{thm-restate}\usepackage{pifont}
\usepackage[capitalize,noabbrev]{cleveref}
\usepackage{float}
\usepackage{multirow}

\theoremstyle{plain}

\theoremstyle{definition}

\theoremstyle{remark}

\newcommand{\cmark}{\ding{51}}%

\title{LOKI: Memory-Free Null-Space Constrained Lifelong Knowledge Editing}

\author{%
  Masih Eskandar, Miquel Sirera Perelló, Stratis Ioannidis, and Jennifer Dy %\thanks{Use footnote for providing further information
 \\
  Department of Electrical and Computer Engineering\\
  Northeastern University\\
  Boston, MA, 02115 \\
  \texttt{\{eskandar.m,sirera.m,ioannidis,j.dy\}@northeastern.edu} \\
}

\begin{document}

\maketitle

\begin{abstract}
  Lifelong knowledge editing aims to efficiently and sequentially update language models over time, as new knowledge becomes available or when the model makes mistakes, while preserving acceptable performance on past knowledge. One unresolved challenge is that existing methods modify a fixed set of layers for all new knowledge samples, reducing flexibility and increasing catastrophic forgetting. Another is requiring access to previous knowledge and extensive pre-processing to obtain data statistics. To address these challenges, we introduce LOKI, a novel approach that uses dynamic layer selection based on the Hilbert-Schmidt Independence Criterion and projects gradient updates onto the null-space of the model weights, bypassing the requirement for previous knowledge access. We show that LOKI achieves superior performance to existing approaches across a wide variety of experiments, achieving up to a 14\% improvement in average accuracy. We release our code at \href{https://github.com/neu-spiral/LOKI}{https://github.com/neu-spiral/LOKI}.
\end{abstract}

\section{Introduction}
Knowledge is a fundamental aspect of human intelligence, and the capabilities of any intelligent agent are largely dependent on its knowledge of the real world. Recently, Large Language Models (LLMs) have enjoyed tremendous success in a wide variety of real-life applications. However, upon deployment, LLMs often make mistakes~\cite{balachandran2022correcting}, suffer from hallucinations~\cite{ji2023survey}, and bias~\cite{ferrara2023should}. Furthermore, in an ever-evolving environment, the LLM's knowledge needs to be constantly updated with new facts. However, LLM knowledge is typically limited to what was present during pre-training, and retraining LLMs is incredibly costly and unsustainable for continual updates. To this end, \textit{lifelong knowledge editing} seeks to update LLM knowledge in a cheap and sustainable manner over time. 

A lifelong knowledge editing method should have three desirable properties~\cite{sinitsin2020editable, meng2022locating,yao2023editing, zhang2024comprehensive}: (i) \textit{reliability}, i.e., the edited LLM should reliably learn and recall the new knowledge presented to it, (ii) \textit{generalizability}, i.e., the LLM should be able to generalize and potentially reason about new knowledge rather than memorizing it, and (iii) \textit{locality}, i.e., learning new knowledge should not interfere with model capabilities and previous knowledge. Furthermore, the \textit{lifelong} nature of knowledge editing requires that the knowledge updates be made efficiently and sequentially. The process of editing new knowledge into an LLM roughly follows a three-phase structure (\cref{fig:main}): \textit{Layer Selection}, where a subset of model layers are chosen to be modified and the rest are frozen, \textit{Knowledge Consolidation}, where measures are taken to preserve the past knowledge already known by the model, and \textit{Knowledge Insertion}, where the new knowledge is to be learned by the model while making sure past knowledge is not forgotten.

Previous works have made attempts at lifelong knowledge editing in a variety of ways~\cite{zhang2024comprehensive}: by utilizing external knowledge~\cite{zheng2023can, zhong2023mquake}, auxiliary memory~\cite{hartvigsen2023aging, wang2024wise}, or editing intrinsic knowledge~\cite{meng2022mass, fang2024alphaedit}. In order to balance the incorporation of new knowledge and the preservation of past knowledge, compromises are made with regard to memory, computation, and added parameters, with some requiring expensive preprocessing steps for each LLM~\cite{meng2022mass, fang2024alphaedit}. Ideally, lifelong knowledge editing should be performed without significant computational overhead, access to external knowledge or memory, and without adding additional parameters.

To this end, we propose \textbf{L}ayer-Adaptive \textbf{O}rthogonal \textbf{K}nowledge \textbf{I}nsertion (LOKI), a novel lifelong knowledge editing method. Firstly, we propose a dynamic layer selection algorithm based on an information bottleneck criterion, utilizing the Hilbert-Schmidt Independence Criterion (HSIC)~\cite{gretton2005hsic} to identify the most important layers for a given fact to edit. Second, to consolidate previous knowledge, we propose to compute the null-space of the LLM weights themselves, without the need for access to examples of previous knowledge or extensive preprocessing. Finally, to insert the new knowledge directly into model weights of the previously selected layers, we perform projected gradient descent while using the previously computed null-space projections. Our contributions are as follows:
\begin{itemize}
    \item We propose a novel layer selection algorithm that uses HSIC Information Bottleneck to identify the most important layers to edit for a given fact. To the best of our knowledge, we are the first to explore per-sample layer selection.
    \item We propose using the null-space projection of the LLM weights to preserve past knowledge. We provide intuition and theoretical insight into this choice.
    \item We empirically verify the effectiveness of LOKI on various datasets, observing up to a $14\%$ increase in average performance. We conduct exploratory experiments and ablation studies into the components of our method.
\end{itemize}

\begin{figure*}
\centering
\label{fig1}
    \includegraphics[width=\textwidth]{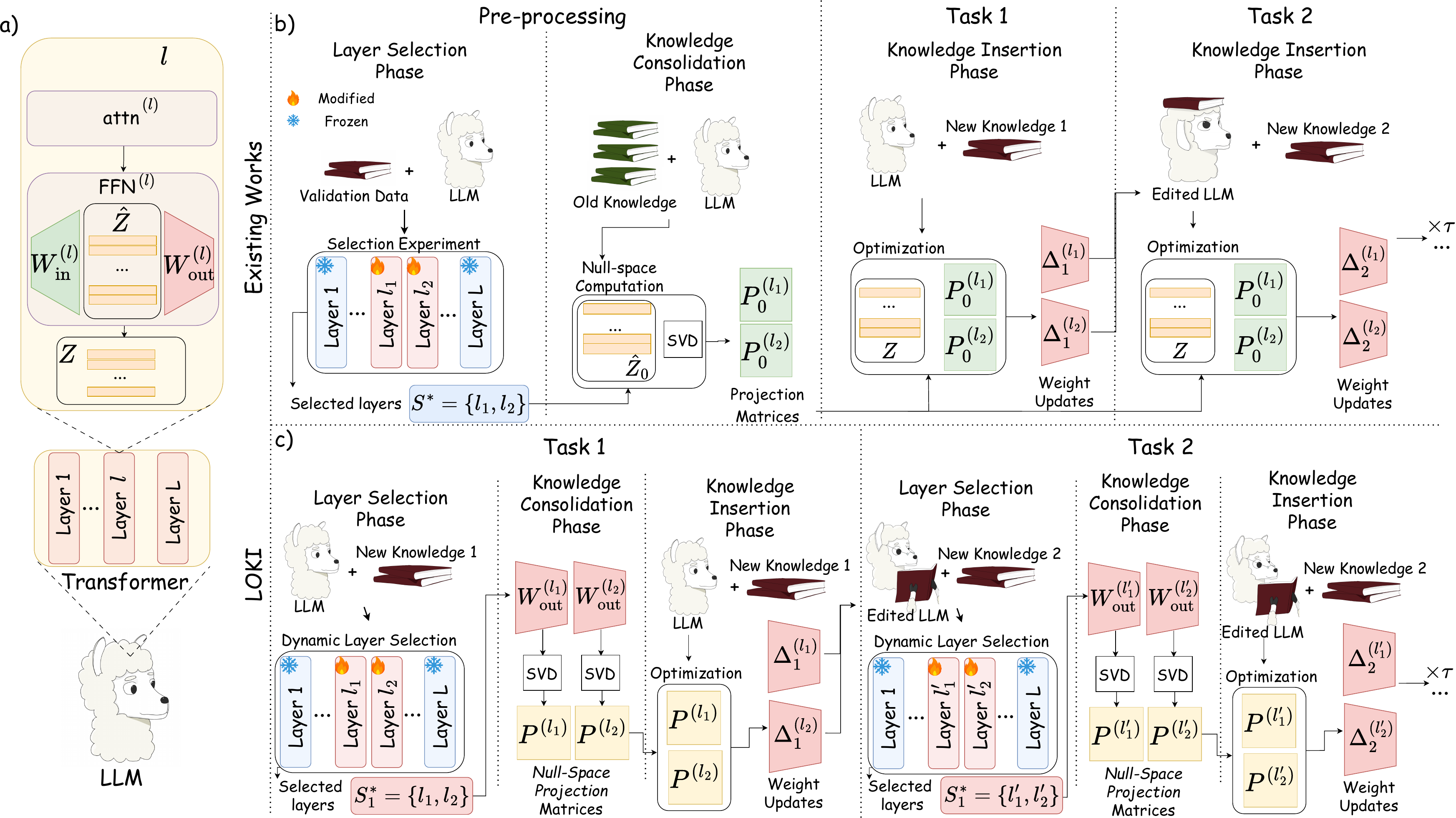}
\caption{ (a) Overview of an LLM, (b) the lifelong knowledge editing pipeline for existing works, and (c) the same pipeline for our method LOKI. Existing works typically decide a fixed set of layers during pre-processing and make all edits on those layers. Unlike existing works, LOKI selects important layers on a per-sample basis  and does not need access to previous knowledge samples or extensive pre-processing.}
\label{fig:main}
\end{figure*}

\section{Related Work}

\paragraph{Knowledge Editing.}
Existing works on knowledge editing of LLMs fall into three categories~\cite{zhang2024comprehensive}: resorting to external knowledge, merging knowledge into the model, and editing intrinsic knowledge. Methods such as IKE~\cite{zheng2023can} and MeLLo~\cite{zhong2023mquake} store facts in memory and retrieve them at inference time. WISE~\cite{wang2024wise} trains an auxiliary feed-forward network with activation routing, while GRACE~\cite{hartvigsen2023aging} maintains a discrete codebook as an adapter. Direct editing methods, most relevant to this work, modify model weights to incorporate new knowledge while preserving old. MEMIT~\cite{meng2022mass} updates parameters in batches, using statistics collected from Wikipedia samples for regularization. AlphaEdit~\cite{fang2024alphaedit} extends MEMIT to lifelong editing by projecting updates onto the null-space of old knowledge features. Our work differs from existing approaches by dynamically choosing which layer to modify on a per-sample basis, while existing works typically modify a fixed set of layers. Additionally, unlike previous works, our method does not require accessing or collecting the statistics of past knowledge during pre-processing, nor the addition of new parameters or codebooks.

\paragraph{Continual Learning.}
Machine Learning models typically struggle to learn from sequential training without having access to previous data, i.e., Continual Learning or Lifelong Learning. The central challenge of Continual Learning is a phenomenon called catastrophic forgetting~\cite{mccloskey1989catastrophic}, where the model's performance degrades significantly on previously learned data samples when exposed to new training data. Lifelong knowledge editing~\cite{wang2024wise, fang2024alphaedit} seeks to update model knowledge sequentially across multiple edits. In this scenario, the concern of catastrophic forgetting is twofold: the model may forget the knowledge that was learned during training, and it may also forget the new knowledge introduced by previous edits, when newer edits are performed. %\emph{Fine-tuning} is related but targets a different problem: they typically adapt a pretrained LLM to a new task, or domain, whereas knowledge editing targets discrete factual corrections without broadly repurposing the model~\cite{zhang2024comprehensive}.
%We tackle the issue of catastrophic forgetting through parameter isolation by way of dynamic layer selection and regularization through information bottleneck and null-space projection.

%\paragraph{Null-space Projection}
Null-space projection has emerged as a principled approach to mitigate catastrophic forgetting in continual learning by constraining gradient updates to directions that minimally interfere with previously learned knowledge~\cite{farajtabar2020ogd}. 
% The core insight is that neural networks are typically overparameterized, meaning parameter updates can be projected onto subspaces orthogonal to those important for prior tasks without significantly affecting past performance.
Orthogonal Gradient Descent (OGD)~\cite{farajtabar2020ogd} projects gradients for new tasks onto the orthogonal complement of gradients from previous tasks, ensuring minimal change to prior predictions. Gradient Projection Memory (GPM)~\cite{saha2021gpm} extends this idea by computing orthonormal bases of gradient subspaces via Singular Value Decomposition (SVD) on layer activations and storing them in memory for subsequent projection. 
% More recent work has explored scaled gradient projection~\cite{saha2023sgp}, which balances strict orthogonality with plasticity for new tasks by allowing scaled updates along important gradient subspaces. 
% In knowledge editing, AlphaEdit~\cite{fang2024alphaedit} projects weight updates onto the null-space of previous knowledge embeddings, requiring pre-computation of feature statistics from a large corpus.
Our approach differs fundamentally: we project gradients onto the null-space of the \emph{weight matrix} itself, eliminating the need for stored feature statistics or access to previous data while still preserving knowledge learned during training.

\paragraph{Hilbert-Schmidt Independence Criterion.}
%The Hilbert-Schmidt Independence Criterion (HSIC)~\cite{gretton2005hsic} is a kernel-based measure of statistical dependence between random variables. Given two random variables $X$ and $Y$ mapped to reproducing kernel Hilbert spaces (RKHSs) via kernels $k$ and $l$, HSIC is defined as the squared Hilbert-Schmidt norm of the cross-covariance operator between their feature representations; we provide the formal mathematical definition in \cref{app:hsic}. A key property is that with universal kernels, HSIC equals zero if and only if the variables are statistically independent, making it a robust non-parametric independence test.
The Hilbert-Schmidt Independence Criterion (HSIC)~\cite{gretton2005hsic} is a kernel-based measure of statistical dependence, defined as the squared Hilbert-Schmidt norm of the cross-covariance operator in reproducing kernel Hilbert spaces; we provide a formal definition in App.~\ref{app:hsic}. In deep learning, HSIC has been applied to train networks without backpropagation via the HSIC Bottleneck~\cite{ma2019hsicbottleneck}, which optimizes layers by maximizing dependence with outputs while minimizing dependence with inputs, an objective inspired by the Information Bottleneck principle~\cite{tishby2015deepib}. This framework enables efficient compression of learned representations by retaining task-relevant information while discarding redundant input details. We leverage this principle for dynamic layer selection: layers exhibiting high dependence with the output and low dependence with the input are identified as encoding task-critical knowledge and are prioritized for editing. Additionally, we incorporate HSIC as a regularization term during optimization to encourage information compression and reduce interference with previously learned knowledge.

%efine HSIC mathematically maybe in appendix
\section{Background}
\paragraph{Notation.} Let $f_{\theta}:\mathcal{X} \rightarrow\mathcal{Y}$ be an LLM parameterized by $\theta \in \mathbb{R}^{d}$, which maps an input $X \in \mathbb{R}^{ d_1\times k}$, consisting of $k\in \mathbb{N}$ tokens of dimension $d_1$, to a probability distribution $y \in \mathbb{R}^{|\mathcal{V}|}$ over the vocabulary of the model with size $|\mathcal{V}|$, denoting the probability of the next token in the sequence. The LLM consists of $L$ layers, each containing a self-attention module, and a feed-forward network (FFN). As shown in ~\cref{fig:main}(a), the FFN in layer $l$ of the LLM performs the operation:
\begin{align}
    z^{(l)}= \mathtt{FFN}^{(l)}(a^{(l)}) = W_{\mathtt{out}}^{(l)}\sigma(W_{\mathtt{in}}^{(l)}a^{(l)}) ,
\end{align}
where $a^{(l)} \in \mathbb{R}^{d_1}$ is a token in the output of the attention module, $z^{(l)} \in \mathbb{R}^{d_1}$ is the output,   $\sigma$ is an activation function,  and $W_{\mathtt{in}}^{(l)} \in \mathbb{R}^{d_2 \times d_1}$, and $W_{\mathtt{out}}^{(l)} \in \mathbb{R}^{d_1\times d_2}$ are up and down projection matrices (the parameters of the FFN). For brevity, we will denote the intermediate activation as $\hat{z}^{(l)}:=\sigma(W_{\mathtt{in}}^{(l)}a^{(l)})$, meaning $\mathtt{FFN}^{(l)}(a^{(l)}) = W_{\mathtt{out}}^{(l)}\hat{z}^{(l)}$. %Throughout the paper, we will use lower-case letters such as $z$ to denote vectors and scalars, and capital letters such as $Z$ to denote a batch of vectors or matrices.
% We use $\mathtt{Null}(Z)$ to denote the null-space of the matrix $Z$. The operation $\mathtt{FFN}^{(l)}:= z$ denotes replacing the output of the  $\mathtt{FFN}^{(l)}$ module with a given vector $z$. $\mathcal{L}(O|f_\theta(X))$ is shorthand for the negative log-likelihood loss function, depending on the output of the LLM given the subject and relation of the input $X =(S, R, O)$ and its corresponding desired object as output $O$.

\paragraph{Problem Setting.} During lifelong knowledge editing, a pre-trained LLM is to be adapted to sequentially arriving new knowledge. The goal of knowledge editing is to incorporate new knowledge while maintaining its predictive performance over past knowledge.
Formally, consider a pre-trained LLM $f_\theta$. A stream of tasks arrives sequentially, each comprising a single edit from the new knowledge dataset $\mathcal{D}_{\mathtt{edit}}=\{X_1, X_2, ..., X_\tau\}$: i.e., at task $t\leq \tau$, data point $X_t$ arrives, consisting of tokens partitioned into a subject, relation, and object, i.e., $X_t:=(S_t, R_t, O_t)$.  Let $\mathcal{L}$ be the negative log-likelihood loss function, and $\mathcal{D}_{\mathtt{past}} =\{X_{0,1}, X_{0,2},..., X_{0,n}\}$ a reference dataset of past knowledge. The goal of knowledge editing is to adjust weights $\theta$ after each task arrival so that (a) loss $\mathcal{L}(O_t|f_\theta(X_t))$ is minimized, while simultaneously (b) $\sum_{j=1}^n\mathcal{L}(O_{0,j}|f_\theta(X_{0,j}))+ \sum_{i=1}^{t-1} \mathcal{L}(O_i|f_\theta(X_{i}))$, i.e., the loss values for previous edits and past knowledge, does not increase significantly.

\paragraph{Knowledge Editing Pipeline.} Typically, knowledge editing algorithms follow three phases as depicted in \cref{fig:main}(b): (i) layer selection, where the layers to be edited are selected, (ii) knowledge consolidation, where measures are taken to prevent the forgetting of past knowledge, and (iii) knowledge insertion, where the new knowledge is inserted into the model. As a concrete example, we summarize  prior art closest to our work  w.r.t.~each of these three phases below.

\paragraph{MEMIT and AlphaEdit.}
As MEMIT~\cite{meng2022mass} and AlphaEdit~\cite{fang2024alphaedit} are particularly relevant to our method, we describe their methodology with greater detail in this section.

{\noindent\emph{(i) Layer Selection}} Meng et al.~\citep{meng2022locating},  preceding MEMIT, choose the editing layer during pre-processing: model inputs are corrupted by Gaussian noise, and subsequently, each token of each layer is denoised, and the effect is measured on the output. They conclude that knowledge in LLMs must be stored in the early layer FFNs, and hence fix the early layers as the same edited layers for all tasks. More specifically, only the $W_{\mathtt{out}}^{(l)}$ matrix of each  of these layers is modified. MEMIT and AlphaEdit follow the same assumptions and edit exactly the same layers and restricted parameters.

% We hypothesize that editing only a fixed set of layers for all samples can reduce flexibility, leading to increased interference with past knowledge. To this end, we propose a per-sample dynamic layer selection algorithm in \cref{sec:layersel}. To the best of our knowledge, we are the first to attempt per-sample layer selection for knowledge editing.
{\noindent\emph{(ii) Knowledge Consolidation.}} After choosing layers, both methods consolidate past knowledge via additional pre-processing (see Fig.~\ref{fig:main}(b)). Concretely, MEMIT summarizes past knowledge by computing the uncentered covariance matrix $C_0^{(l)} \in \mathbb{R}^{d_2\times d_2}$ of the past knowledge embeddings,
\vspace{-2mm}
\begin{align}
\label{eq:memitc}
    C_0^{(l)} = \hat{Z}_0\hat{Z}_0^\top,
\end{align}
where $\hat{Z}_0 \in \mathbb{R}^{d_2\times n}$ are the hidden $\mathtt{FFN}^{(l)}$ embeddings corresponding to $\mathcal{D}_\mathtt{past}$. AlphaEdit takes this a step further by computing the null-space of $\hat{Z}_0$ instead, resulting in a null-space projection matrix $P_0^{(l)}$, as depicted in \cref{fig:main}(b):
\begin{align}
\label{eq:alphanull}
%\begin{split}
    U_0^{(l)}\Sigma_0^{(l)} V_0^{(l)\top} &= \mathtt{SVD}(\hat{Z}_0\hat{Z}_0^\top) ,& &\text{and} &
    P_0^{(l)} &= \bar{V}_0^{(l)}\bar{V}_0^{(l)\top},
%    \end{split}
\end{align}
where $\bar{V}_0^{(l)}$ are the columns of ${V}_0^{(l)}$ corresponding to zero singular values.\footnote{Note that in practice, $\hat{Z}_0\hat{Z}_0^\top$ may be full rank and have an empty null-space, hence they use singular values below a certain threshold.} 
% Existing works such as~\cite{hartvigsen2023aging, wang2024wise} propose to add additional parameters or codebooks as a form of memory. While effective, such methods introduce even more parameters/memory to obscenely large language models. Similar to~\cite{meng2022locating, fang2024alphaedit}, we propose to edit the model parameters directly. Below, we will present briefly the formulation of MEMIT~\cite{meng2022mass} and AlphaEdit~\cite{fang2024alphaedit} as they are directly related to our method.

{\noindent\emph{(iii) Knowledge Insertion.}} To insert the new knowledge, MEMIT optimizes an intermediate vector inserted into the model activations. Concretely, they first obtain a desirable vector $z^{(l)*} \in \mathbb{R}^{d_1}$ \textit{for each sample} $X_t$, by inserting an optimizable vector at the output of a selected layer $l$'s FFN module:
\begin{align}
\label{eq:memitvec}
    z^{(l)*} = \arg\min_z \mathcal{L}(O_t |f_\theta(X_t))_{\mathtt{FFN}^{(l)} := z}, 
\end{align}
where $\mathtt{FFN}^{(l)} := z$ denotes replacing the output of $\mathtt{FFN}^{(l)}$ with $z$.
The resulting $z^{(l)*}$ vectors are aggregated in $Z^{(l)*} \in \mathbb{R}^{d_1 \times \tau}$ and then used to edit the model weights directly, while using aggregate statistics of past knowledge as a regularizer:
\begin{align}
\label{eq:memit}
    W_{\mathtt{out}}^{(l)*}  = \arg\min_W \|W\hat{Z}^{(l)} - Z^{(l)*}\|_2^2 + \|W\hat{Z}^{(l)}_0 - Z^{(l)}_0\|_2^2,
\end{align}
where $\hat{Z}^{(l)} \in \mathbb{R}^{d_2\times \tau}, \hat{Z}^{(l)}_0\in \mathbb{R}^{d_2\times n}$ are the intermediate $\mathtt{FFN}^{(l)}$ activations for the new and past knowledge respectively, and $Z_0^{(l)} \in \mathbb{R}^{d_1 \times n}$ is the output of the out projection, i.e., $Z_0^{(l)} = W_{\mathtt{out}}^{(l)}\hat{Z}^{(l)}_0$. There is an analytical solution to \cref{eq:memit}, using the $C_0^{(l)}$ matrix from \cref{eq:memitc}.

AlphaEdit follows the same procedure, but adjusts \cref{eq:memit} to include the null-space projection from \cref{eq:alphanull}, as well as the features of previous edits: \vspace{-1mm}
\begin{align}
    \label{eq:alpha}
    W_{\mathtt{out}}^{(l)*} = W_{\mathtt{out}}^{(l)} + \mathop{\arg\min}_\Delta \big\{ \|(W_{\mathtt{out}}^{(l)} + \Delta P^{(l)}_0)\hat{Z}^{(l)} - Z^{(l)*}\|_2^2 +  \|\Delta P^{(l)}_0\|_2^2 + \|\Delta P_0^{(l)}\hat{Z}^{(l)}_p\|_2^2\big\},
\end{align} 

\vspace{-2mm}where $\hat{Z}^{(l)}_p \in \mathbb{R}^{d_2\times t-1}$ are the $\mathtt{FFN}^{(l)}$ hidden embeddings corresponding to edits of previous tasks $\{1,\dots, t-1\}$. 
% Similar to MEMIT, AlphaEdit suffers from the need to compute $\hat{Z}_0$ as well. In the following section, we attempt to address this problem by substituting the null-space projection matrix $P$.
It is worth noting that the analysis by Meng et al.~\cite{meng2022locating} with regard to layer selection is limited to a single architecture and dataset. Furthermore, as mentioned in the appendix of~\cite{meng2022locating}, the denoising process is performed on ten consecutive layers at once, yet the effects are attributed to only the middle layer. Finally,  Wang et al.~\cite{wang2024wise} show that,  for some architectures, editing the mid-to-late layers is optimal in terms of empirical performance. All of this leads us to hypothesize that static layer selection is limiting the capabilities of knowledge editing algorithms. We propose a dynamic per-sample layer selection algorithm in \cref{sec:layersel}. Furthermore, both MEMIT and AlphaEdit require the explicit computation of $\hat{Z}_0$, typically with $n=100000$. Even though this needs to be done only once, it requires access to past knowledge samples, and it takes significant time and computation: roughly 56 hours of wallclock time to compute these statistics for the Llama3-8B-Instruct model on a single NVIDIA A100 GPU. This process also needs to be repeated for every new LLM. We address these challenges by proposing a substitute for the projection matrix obtained in \cref{eq:alphanull}, without requiring accessing or pre-processing previous knowledge (see \cref{sec:knowcon}). 

\section{LOKI: Layer-adaptive Orthogonal Knowledge Insertion}

 We present LOKI (\cref{fig:main}(c)), a novel knowledge editing method that seeks to overcome the shortcomings of existing works. We propose a per-sample layer selection algorithm allowing for more flexibility and less interference with past knowledge. Furthermore, we  bypass the requirement for access to previous knowledge, extensive pre-processing, additional parameters or memory. We describe our method in detail below.
 % First, for layer selection, we propose a novel dynamic layer selection algorithm based on an HSIC information bottleneck criterion. Second, for knowledge consolidation, we propose to compute the null-space of the model weights for the chosen layers, and finally, we describe the editing procedure, where we optimize the loss function over the selected layer parameters using projected gradient descent, while adding our proposed HSIC bottleneck term to the loss as additional regularization. 
% Finally, we provide theoretical insight into our choices regarding the null-space projection and provide a pseudo-code of our algorithm.
% only highlight the key differences in phases, and don't introduce notation
% First, we describe our dynamic layer selection algorithm, which selects a set of layers $S$ for each given input $X_t$, using an HSIC information bottleneck ($\mathtt{HIB})$ criterion. Second, we describe our approach to making each edit to the model. We propose to optimize the loss function $\mathcal{L}(O_t|f_\theta(X))$ over the FFN weights $W_{\mathtt{out}}^{(l)},\ \forall l\in S$, while projecting the weight updates $\Delta^{(l)}$ to the null-space of the corresponding weight matrix $W_{\mathtt{out}}^{(l)}$. We also add our layer selection 
\paragraph{Layer Selection Phase.}
\label{sec:layersel}
% Previous works have typically opted to edit only a small number of the LLM layers, most notably~\citet{meng2022mass, fang2024alphaedit} choose the first few layers of the LLM.  However, their analysis is limited to a single architecture and dataset. Furthermore, as mentioned in their appendix, the \textit{interventions} are performed on ten layers at once, but the effects of the interventions are attributed to only the layer in the center of all ten. Finally, other works such as~\cite{wang2024wise} opt to edit later layers in the model with great success, albeit by adding additional parameters. This leads us to believe that the knowledge within the model is not necessarily stored within the early layers for every sample, and calls for a per-sample selection strategy. 
We propose to use an HSIC-based information bottleneck criterion to dynamically identify which layers are more important for a given sample. To the best of our knowledge, this is the first attempt at dynamic layer selection for knowledge editing.
Let $X_t$ be the input embeddings of an edit sample at time $t$, and $\hat{Z}^{(l)}$ be the hidden FFN representations of the $l$-th layer. We define the following information bottleneck criterion $\mathtt{HIB}$ for a given layer $l$ and input $X_t$:
\begin{align}
    \label{eq:hib}
    \mathtt{HIB}(X_t, l) =  \lambda_y\mathtt{HSIC}(\hat{Z}^{(l)}, Z^{(L)}) - \lambda_x \mathtt{HSIC}(X_t, \hat{Z}^{(l)}),
\end{align}
where $\mathtt{HSIC}$  is the Hilbert-Schmidt Independence Criterion, $\lambda_x, \lambda_y$ are scalar hyperparameters, and $Z^{(L)}$ are the FFN output embeddings at the final layer, before the prediction head. $\mathtt{HIB}$ measures the flow of information between the input and output embeddings; where if a layer has higher mutual information with the output (i.e., higher $\mathtt{HSIC}$) and lower mutual information with the input, intuitively, that means it is holding more important information with regards to that sample, and discarding the noisy information.
To identify a subset $S^* \subset \{1,...,L\}$ of model layers with a fixed size $m$, we select the top $m$ layers in descending order of $\mathtt{HIB}(X_t, l)$. This is equivalent to optimizing the sum of $\mathtt{HIB}(X_t, l)$ over subsets of size $m$, i.e.,
%\begin{align}
 \(   S^* = \arg\max_{S\subset [L], |S|=m}\ \sum_{l \in S}  \mathtt{HIB}(X_t, l) .\)
%\end{align}
To compute $S^*$, we evaluate the $\mathtt{HIB}$ values for all layers using a single forward pass, and then choose the $m$ layers with the highest values, to be edited for that specific input sample $X_t$.

\paragraph{Knowledge Consolidation Phase.}
\label{sec:knowcon}
To consolidate previous knowledge, we propose to compute the null-space projection of the selected layers. The intuition behind this choice is that the information about past model knowledge should be ingrained into the weight matrix itself during training. We formalize this intuition in \cref{thm:null} further below, which provides theoretical insight into the relationship of the weight null-space and the null-space of the embeddings.
To this end, after selecting $S^*$ during the layer selection phase, we can compute the null-space projection of $W_{\mathtt{out}}^{(l)},$ where $ l\in S^*,$ using Singular Value Decomposition (SVD), as follows:%\vspace{-2mm}
\begin{align}
\label{eq:lokinull}
%\begin{split}
U^{(l)}\Sigma^{(l)} V^{(l)\top}& = \mathtt{SVD}(W_{\mathtt{out}}^{(l)}) & &\text{and} &
    P^{(l)} &= \bar{V}^{(l)}\bar{V}^{(l)\top}, & &\text{for all}~l\in S^*,
 %   \end{split}
\end{align}

\vspace{-2mm}where $P^{(l)} \in \mathbb{R}^{d_2\times d_2}$ is the projection matrix and $\bar{V}^{(l)} \in \mathbb{R}^{d_2 \times (d_2-d_1)}$ is made up of the columns of ${V}^{(l)}$ which correspond to zero singular values\footnote{Note that, unlike $\hat{Z}^{(l)}_0$, $W^{(l)}_{\mathtt{out}}$ is a low-rank matrix, since $d_2 > d_1$, and hence is guaranteed to have a non-empty null-space.}. It is important to note that in contrast to \cref{eq:alphanull}, solving \cref{eq:lokinull} does not require computing $\hat{Z}_0^{(l)}$, and can always be computed by only accessing $W_{\mathtt{out}}^{(l)}$, saving on time, computation, and memory.

The following theorem sheds light on our intuition behind choosing the weight null-space, and the relationship it has with the feature null-space as used in \cref{eq:alphanull}: 
\begin{restatable}{theorem}{WeightNullspace}
\label{thm:null}
    Let $W_{\mathtt{out}}^{(l)} \in \mathbb{R}^{d_1\times d_2}$ be the down projection matrix for the $l$-th layer, learned using gradient descent on samples $X_{0,1},...,X_{0,n}$. For each sample $X_{0,i} \in \mathbb{R}^{d_1\times k}$, let $\tilde{z}_{0,i}^{(l)} \in \mathbb{R}^{d_2 \times k}$ be the corresponding token hidden embeddings at the input of $W_{\mathtt{out}}^{(l)}$ \underline{at the time of training}. Let $\tilde{Z}^{(l)}_0 \in \mathbb{R}^{d_2\times n}$ be the concatenation of all $\tilde{z}_{0,i}^{(l)}$. Then %\vspace{-2mm}
    %\begin{align}
     \(   \mathtt{Null}(\tilde{Z}^{(l)\top}_0) \subset \mathtt{Null}(W_{\mathtt{out}}^{(l)}).\)
    %\end{align}
\end{restatable}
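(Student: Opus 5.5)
The plan is to track how gradient descent builds $W_{\mathtt{out}}^{(l)}$ and to show that every update has its row space inside the column space of $\tilde{Z}^{(l)}_0$. Orthogonality then gives the inclusion. Take any $v\in\mathtt{Null}(\tilde{Z}^{(l)\top}_0)$. Then $\tilde{z}^\top v=0$ for every column $\tilde{z}$ of $\tilde{Z}^{(l)}_0$, i.e.\ $v$ is orthogonal to every token hidden embedding seen during training. It therefore suffices to show that $W_{\mathtt{out}}^{(l)}v=0$, and for that it is enough that each row of $W_{\mathtt{out}}^{(l)}$ lies in $\mathrm{span}\{\tilde{z}^{(l)}_{0,i,j}\}$, the span of all token embeddings at the input of $W_{\mathtt{out}}^{(l)}$.

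The key computation is the gradient of the loss with respect to $W_{\mathtt{out}}^{(l)}$. The layer computes $z=W_{\mathtt{out}}^{(l)}\tilde{z}$ token by token. By the chain rule, for one sample $X_{0,i}$ we get
\(\nabla_{W_{\mathtt{out}}^{(l)}}\mathcal{L}(O_{0,i}\mid f_\theta(X_{0,i}))=\sum_{j=1}^{k}\delta_{i,j}\,\tilde{z}_{0,i,j}^{(l)\top}\),
where $\delta_{i,j}=\partial\mathcal{L}/\partial z_{i,j}\in\mathbb{R}^{d_1}$ is the backpropagated error at token $j$. The sum over tokens appears because $W_{\mathtt{out}}^{(l)}$ is shared across tokens. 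Each such gradient is a sum of outer products with right factors among the training embeddings, so it annihilates $v$: $\delta_{i,j}\tilde{z}_{0,i,j}^{(l)\top}v=0$. Let $W_s$ be the iterate at step $s$ and let $B_s$ be the mini-batch used at that step. Writing the update as $W_{s+1}=W_s-\eta_s\sum_{i\in B_s}\nabla_W\mathcal{L}_i$ and inducting on $s$, we get $W_{s+1}v=W_s v$. Note that $\tilde{z}$ here is the embedding \emph{at the time of that step}, which is why the statement is phrased with the training-time embeddings. Hence $W_{\mathtt{out}}^{(l)}v=W_0 v$, where $W_0$ is the initialization. Momentum and Adam-free variants also preserve the property, since they only form linear combinations of past gradients.

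The main obstacle is the initialization term $W_0 v$, which need not vanish for a generic random $W_0$. I would handle it in one of two ways. The first is to adopt the idealization that training starts from $W_0=0$, or more generally from a $W_0$ whose row space lies in the span of the embeddings; then the result is exact. The second is to include weight decay $\lambda$ in the update, giving $W_{s+1}v=(1-\eta_s\lambda)W_s v$. In that case $W_{\mathtt{out}}^{(l)}v=\prod_s(1-\eta_s\lambda)\,W_0v$, which decays geometrically to zero over the very long pre-training horizon, so the inclusion holds in the limit. I would state the theorem under the zero-initialization (or weight-decay-limit) assumption explicitly in the proof and remark that the conclusion is approximate otherwise.
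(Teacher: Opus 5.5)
Your proposal is correct and follows essentially the same route as the paper, which also writes each gradient as $\sum_j \frac{\partial\mathcal{L}_i}{\partial z_{i,j}}\hat{z}_{i,j}^\top$, accumulates the updates into $W_{\mathtt{out}} = W_0 + (\partial Z)A\hat{Z}^\top$, and then drops $W_0$ by assuming the initialization is (approximately) zero — the same idealization you identify as necessary. Your step-by-step induction $W_{s+1}v = W_s v$ and your weight-decay alternative are refinements of that same argument rather than a different proof.
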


\vspace{-2mm}A proof of the theorem is presented in App.~\ref{app:proof}, and follows the fact that the gradient updates to $W_{\mathtt{out}}^{(l)}$ are an outer product of the features during training. 

Note that the above theorem is regarding the model features during training, not after training has completed. Therefore, we must make a simplifying assumption: Let $\hat{z}_{0,i}^{(l)}$ be the token hidden embeddings corresponding to $X_{0,i}$ at the input of $W_{\mathtt{out}}^{(l)}$ \underline{after training} and $\hat{Z}_0^{(l)} \in \mathbb{R}^{d_2\times n}$ be the concatenation of said embeddings. Assuming $\hat{z}_{0,i}^{(l)} \approx \tilde{z}_{0,i}^{(l)},\ \forall i$, then we have \vspace{-2mm}
\begin{align}
    \mathtt{Null}(\hat{Z}^{(l)\top}_0) \subset \mathtt{Null}(W_{\mathtt{out}}^{(l)})
\end{align}

\vspace{-3mm}The above states that if we assume the model embeddings, and consequently their span, have not changed significantly from training time, the resulting null-spaces are therefore overlapping, and the null-space of the trained model features is also a subset of the weight null-space. While this is a strong assumption, the similarity of model features during and after training is supported by the observations in Neural Tangent Kernel (NTK) literature~\cite{chizat2019lazy}. They observe that during the training of neural networks, the model training enters a lazy regime, and model parameters hardly vary. Furthermore, we empirically validate the effectiveness of projecting the gradients onto the weight null-space in our experiments and ablation studies.

% \footnote{Since $P^{(l)}$ can always be computed from the model easily, it does not need to be stored along with the model}.

% Furthermore, the null-space of $W^{(l)}_{\mathtt{out}}$ can always be computed as described above, meaning it does not need to be stored on disk, \underline{adding no parameters to the model.} refer to the theorem and say high level intuition

% compare this to previous works also

\begin{algorithm}[!t]
\scriptsize
\caption{LOKI Knowledge Editing}\label{alg:loki}
\begin{algorithmic}
\REQUIRE Editing dataset $\mathcal{D}_{\mathtt{edit}}$ with $\tau$ tasks, LLM $f_\theta$, Num edit layers $m$, Num steps $k$, Adjusted Norm constraint $\eta'$, Learning Rate $\alpha$
\FOR{$t = 1...\tau$}
\STATE $X_t, O_t \gets \mathcal{D}_{\mathtt{edit}}[t]$ \hfill \COMMENT{Input arrives}
\STATE $\hat{Z}^{(1)}\ \dots\ \hat{Z}^{(L)}, Z^{(L)} \gets \{f_\theta(X_t)\}$ \hfill\COMMENT{Hidden Embeddings}\\
\STATE $\mathtt{scores} \gets \{\mathtt{HIB}(X_t, l)\quad \mathbf{for}\quad l=1\dots L\}$
% \FOR{$l=1\dots L$}
% \STATE $\mathtt{scores}.\mathtt{add}\{ \mathtt{HIB}(X_t, l)\}$\hfill\COMMENT{\cref{eq:hib}}
% \ENDFOR
\STATE $S^* \gets \mathtt{top\_m}(\mathtt{scores})$\hfill\COMMENT{Selected Layers}
\FOR{$l \in S^*$}
\STATE $P^{(l)} \gets \mathtt{SVD}(W_{\mathtt{out}}^{(l)})$ \hfill\COMMENT{\cref{eq:lokinull}}
\ENDFOR
\FOR{$i=1\dots k$}
\STATE $\mathcal{L}_{\mathtt{LOKI}} \gets \mathcal{L}(O_t|f_\theta(X_t)) - \sum_{l\in S^*}\mathtt{HIB}(X_t,l)$
\FOR{$l \in S^*$}
\STATE $g \gets (\nabla_{W_{\mathtt{out}}^{(l)}}\mathcal{L}_{\mathtt{LOKI}}) P^{(l)}$ \hfill \COMMENT{Project to Null-Space}
% \IF{$\|g\|_2 > \eta'$}
\vspace{1mm}
\STATE $g \gets \min(\frac{\eta'}{\|g\|_2}, 1)\cdot g$ \hfill \COMMENT{Norm Clip}
% \ENDIF
\STATE $W_{\mathtt{out}}^{(l)} \gets W_{\mathtt{out}}^{(l)} - \alpha \cdot g$
\ENDFOR
\ENDFOR
\ENDFOR
\end{algorithmic}
\end{algorithm}

\paragraph{Knowledge Insertion Phase.}
In this section, we describe the procedure of inserting knowledge into the model weights. Given an input, we propose to optimize the negative log-likelihood of the desired prediction given each input with the added regularization terms, using gradient descent. During optimization, we project the gradients for each layer to the null-space of that layer, using the previously obtained matrices in \cref{eq:lokinull}.
In addition to the null-space projection, we add the HSIC Information bottleneck introduced in \cref{eq:hib} to the optimization objective, described in the following section. Maximizing the $\mathtt{HIB}$ for each layer leads to efficient compression of the information learned by the model, reducing interference with past knowledge. Concretely, we add the  term 
%\begin{align}
 \(   \sum_{l \in S^*}-\mathtt{HIB}(X_t, l)\)
%\end{align}
to the loss function for task~$t$.
Furthermore, we restrict the norm of the weight updates for stability, that is, less interference with previous knowledge.
% To address the issues with previous works and preserving past knowledge, we propose to project each weight update to the null-space of its corresponding weight matrix $W_{\mathtt{out}}^{(l)}$, using the projection matrix $P^{(l)}$. The intuition behind this choice is that the information about past model knowledge should be ingrained into the weight matrix itself during training. In the sections below, we provide additional theoretical insights into this choice and empirically demonstrate its effectiveness using experiments and ablation studies. not here
% fix the S*
% Furthermore, we add the negative sum of the layer selection criterion, $-\sum_{l\in S}\mathtt{HIB}(X_t,l)$, to the optimization objective to further compress the information learned by the model, leading to less interference with past knowledge.
% Additionally, we propose to constrain the norm of the updates to prevent large updates to the parameters. The final optimization problem is as follows: 
Formally, we solve the following optimization problem:%\vspace{-2mm}
\begin{align}
\label{eq:finalopt}
\begin{split}
    \min_{\mathbf{\Delta}}\ \mathcal{L}(O_t|f_\theta(X_t)) - \sum_{l\in S^*}\mathtt{HIB}(X_t,l) \quad
    \text{s.t.}\ W_{\mathtt{out}}^{(l)}\Delta^{(l)T} = 0, \quad \| \Delta^{(l)}\|_F \leq \eta, \quad \forall \Delta^{(l)} \in \mathbf{\Delta}
    \end{split}
\end{align}
where $\mathbf{\Delta} = \{ \Delta^{(l)}|l \in S^*\}$ is the set of weight updates for the selected layers $S^*$. In contrast to AlphaEdit/MEMIT, this approach allows the direct optimization of the objective, without the need for an intermediate vector (\cref{eq:memitvec}). 
% \textbf{blue}{Compared to xxxx, we  EXPLAIN HOW THIS IS DIFFERENT/ BETTER THAN ALPHA EDIT.}

As the objective functions are non-convex, we opt to solve \cref{eq:finalopt} using projected gradient descent, where the gradients of the objective with respect to the weights are calculated using backpropagation, then projected using the projection matrices $P^{(l)}$, and finally normalized if needed to preserve the norm constraint. We present a detailed pseudocode of LOKI in \cref{alg:loki}, and refer the reader to App.~\ref{app:impdetails} for additional implementation details and runtime analysis. We implement our method using the EasyEdit \cite{wang2023easyedit} library, and the code can be found at \href{https://github.com/neu-spiral/LOKI}{https://github.com/neu-spiral/LOKI}.

\section{Experiments}
\label{sec:exp}
% make the sections clear (setup - results)
% Maybe have an explanation for methods/datasets in the supplement 

%In this section, we empirically evaluate the effectiveness of LOKI across various benchmarks and compare it to existing knowledge editing methods. Furthermore, we conduct exploratory experiments and ablation studies to provide insights into the components of our method.

% \subsection{Experimental Setup}
We provide a detailed description of our experimental setup in App.~\ref{app:expdetails}; we briefly summarize it here. 

\paragraph{Baselines.} We compare to SimIE~\cite{guo2025towards}, AlphaEdit~\cite{fang2024alphaedit}, ROME~\cite{meng2022locating}, and MEMIT~\cite{meng2022mass} as baselines that modify model weights directly without the addition of new parameters. Additionally, we compare to WISE~\cite{wang2024wise}, GRACE~\cite{hartvigsen2023aging}, {and DEFER~\cite{mitchell2022memory, hartvigsen2023aging}, as baselines which utilize external memory, codebooks, or parameters. Finally, we include Fine-Tuning (FT) {following \cite{zhang2024comprehensive}} and AdaLoRA~\cite{zhang2023adaptive} as a lower bound on performance.
% \paragraph{Architectures} Following previous works~\cite{wang2024wise}, we utilize three open-source LLM models: Llama3-8B-Instruct~\cite{grattafiori2024llama}, Mistral-7B~\cite{jiang2023mistral7b}, and GPT-J-6B~\cite{gpt-j}.

\paragraph{Architectures, Datasets \& Metrics.} Following previous works~\cite{wang2024wise}, we utilize three open-source models: Llama-3-8B-Instruct~\cite{grattafiori2024llama}, Mistral-7B~\cite{jiang2023mistral7b}, and GPT-J-6B~\cite{gpt-j}. Following~\cite{wang2024wise}, we evaluate LOKI on the \emph{question answering} dataset ZsRE~\cite{levy2017zero}, the hallucination correction dataset SelfCheckGPT~\cite{manakul2023selfcheckgpt}, and finally, to measure out of distribution edit effectiveness, the Temporal dataset~\cite{hewitt2024model}. We measure the effectiveness of editing algorithms using three main metrics: Reliability, Generalization, and Locality. The concrete score reported for each metric is benchmark-specific. Full definitions for ZsRE, SelfCheckGPT, and Temporal are given in App.~\ref{app:expdetails}.
% \paragraph{Metrics} Following existing literature~\cite{wang2024wise}, we measure the effectiveness of editing algorithms using three main metrics: Reliability, Generalization, and Locality. 

% \subsection{Experiment Results}

\paragraph{Main Editing Results.}
\begin{table*}[t!]
%\caption{Results of editing two LLMs on the ZsRE dataset, with $\tau$ being the number of sequential edits. Avg represents the average of all three metrics. Higher is better for all metrics. LOKI outperforms existing methods consistently, especially so with a larger number of edits.}
\caption{Results on the ZsRE dataset (Llama-3-8B-Instruct and Mistral-7B). Metrics include Reliability (Rel.~$\uparrow$), Generalization (Gen.~$\uparrow$), and Locality (Loc.~$\uparrow$) across $\tau$ sequential edits, with Avg. representing the mean of the three scores. \textbf{Bold} is the best result, and underline is the second-best. LOKI consistently achieves the highest average performance across all edit scales, demonstrating superior scalability and stability compared to existing methods.}% as $\tau$ increases up to 1,000.}
\centering
\label{tab:main}
\resizebox{\linewidth}{!}{
\begin{tabular}{c|ccc|c|ccc|c|ccc|c|ccc|c}
\toprule
Method    & \multicolumn{4}{c|}{$\tau=$ 1}              & \multicolumn{4}{c|}{$\tau=$ 10}               & \multicolumn{4}{c|}{$\tau=$ 100}              & \multicolumn{4}{c}{$\tau=$ 1000}             \\
\midrule
          & Rel. ($\uparrow$)  & Gen. ($\uparrow$)    & Loc. ($\uparrow$)    & Avg. ($\uparrow$)         & Rel. ($\uparrow$)    & Gen. ($\uparrow$)    & Loc. ($\uparrow$)    & Avg. ($\uparrow$)         & Rel. ($\uparrow$)    & Gen. ($\uparrow$)    & Loc. ($\uparrow$)    & Avg. ($\uparrow$)         & Rel. ($\uparrow$)    & Gen. ($\uparrow$)    & Loc. ($\uparrow$)    & Avg. ($\uparrow$)         \\
          \midrule
\multicolumn{17}{c}{Llama-3-8B-Instruct}                         \\
\midrule
FT        & \textbf{1} & \textbf{0.99}  & 0.95   & \underline{0.98}        & 0.87   & 0.83   & 0.80   & 0.83        & 0.84   & 0.81   & 0.42   & 0.69        & 0.83   & 0.80   & 0.11   & 0.58        \\
%LoRA      & \textbf{1} & \textbf{0.99}  & 0.49   & 0.83        & 0.76   & 0.72   & 0.32   & 0.60        & 0.47   & 0.45   & 0.30   & 0.41        & 0.36   & 0.35   & 0.20   & 0.31        \\
AdaLoRA   & \textbf{1} & 0.73   & 0.82   & 0.85        & 0.83   & 0.60   & 0.57   & 0.67        & 0.54   & 0.49   & 0.44   & 0.49        & 0.36   & 0.35   & 0.26   & 0.32        \\
ROME      & \underline{0.99} & 0.96   & 0.96   & 0.97        & 0.61   & 0.59   & 0.41   & 0.54        & 0.08   & 0.08   & 0.02   & 0.06        & 0.03   & 0.03   & 0.02   & 0.03        \\
MEMIT     & \underline{0.99} & 0.97   & 0.98   & 0.98        & 0.68   & 0.67   & 0.73   & 0.69        & 0.03   & 0.03   & 0.01   & 0.02        & 0.00      & 0.00      & 0.00      & 0.00           \\
DEFER     & 0.82 & 0.86   & 0.68   & 0.79        & 0.66   & 0.67   & 0.45   & 0.59        & 0.33   & 0.32   & \textbf{1}      & 0.55        & 0.31   & 0.31   & \textbf{1}      & 0.54        \\
GRACE     & \textbf{1}    & 0.46   & \textbf{1}      & 0.82        & \textbf{1}      & 0.42   & \textbf{1}     & 0.81        & \textbf{1}      & 0.39   & \textbf{1}     & 0.8         & \textbf{1}      & 0.37   & \textbf{1}     & \underline{0.79}        \\
SimIE & 0.75 & 0.71 & 0.79 & 0.75 & 0.73 & 0.69 & 0.79 & 0.73 & 0.72 & 0.67 & 0.78 & 0.72 & 0.69 & 0.65 & 0.75 & 0.69 \\
WISE      & \underline{0.99} & \textbf{0.99}   & 0.92      & 0.97        & 0.77   & 0.74   & \textbf{1}   & 0.84        & 0.61   & 0.6    & \textbf{1}      & 0.73        & 0.54   & 0.53   & \underline{0.99}      & 0.69        \\
AlphaEdit & \underline{0.99} & 0.88   &  \underline{0.99}   & 0.95        & 0.92   & \underline{0.84}   & \underline{0.98}   & \underline{0.92}        & \underline{0.90 }  &\underline{0.84}   & 0.94   & 0.90        & 0.76   & \textbf{0.88}   & 0.65   & 0.76        \\
\midrule
LOKI      & \textbf{1}    & \underline{0.98} & \underline{0.99}  & \textbf{0.99} & \underline{0.94} & \textbf{0.87} & \underline{0.98} & \textbf{0.93} & \textbf{0.93} & \textbf{0.86}& \underline{0.96} & \textbf{0.92}      & \underline{0.91} & \underline{0.87}  & \underline{0.88} & \textbf{0.89} \\
\midrule
\multicolumn{17}{c}{Mistral-7B}  \\
\midrule
FT        & \underline{0.99} & \textbf{0.98}   & 0.75   & 0.91        &  0.82   & 0.77    & 0.22        & 0.60   & 0.83   & \underline{0.77}    & 0.07   & 0.56        & 0.69   & 0.65   & 0.07   & 0.47         \\
%LoRA      & \textbf{1} & \textbf{1}   & 0.31   & 0.77        & 0.63   & 0.60   & 0.21   & 0.48        & 0.33   & 0.32   & 0.17   & 0.28        & 0.15   & 0.15   & 0.07   & 0.12         \\
AdaLoRA   & \textbf{1} & 0.91   & 0.84   & 0.92        & 0.91   & 0.78   & 0.62   & 0.77        & 0.56   & 0.51   & 0.43   & 0.50        & 0.50   & 0.48   & 0.40   & 0.46         \\
ROME      & 0.79 & 0.77   & 0.98   & 0.85        & 0.58   & 0.57   & 0.75   & 0.63        & 0.05   & 0.05   & 0.02   & 0.04        & 0.04   & 0.04   & 0.02   & 0.03        \\
MEMIT     & 0.81 & 0.79   & \underline{0.99}   & 0.86        & 0.46   & 0.45   & 0.61   & 0.51        & 0.00      & 0.00      & 0.01   & 0.00           & 0.04   & 0.04   & 0.02   & 0.03        \\
DEFER     & 0.59 & 0.68   & 0.79   & 0.69        & 0.44   & 0.43   & \textbf{1}      & 0.62        & 0.40    & 0.40    & \textbf{1}      & 0.60         & 0.40    & 0.39   & \textbf{1}      & 0.60         \\
GRACE     & \textbf{1}    & 0.36   & \textbf{1}      & 0.79        & \textbf{1}      & 0.15   & \textbf{1}      & 0.72        & \textbf{1}      & 0.15   & \textbf{1}      & 0.72        & \textbf{1}      & 0.02   & \textbf{1}      & 0.67        \\
%SimIE & 62.22 & 58.46 &93.69 &71.46 &60.55&57.89 &93.56 &70.67 &59.95 &57.58 &93.32 &70.28 &58.51 & 56.15 &91.54 & 68.73\\
SimIE & 0.62 & 0.59 & 0.94 & 0.72 & 0.61 & 0.58 & 0.94 & 0.71 & 0.60 & 0.58 & 0.93 & 0.70 & 0.59 & 0.56 & 0.92 & 0.69 \\
WISE      & 0.96 & 0.93   & \textbf{1}      & \underline{0.96}        & 0.88   & \underline{0.84}   & \underline{0.99}      & \underline{0.91}        & 0.82   & 0.76    & \underline{0.99}      & \underline{0.86}        & 0.65    & 0.63   & \underline{0.99}     & \underline{0.76}       \\
AlphaEdit & 0.86 & 0.78   & \underline{0.99}      & 0.88        & 0.82   & 0.76   & \underline{0.99}   & 0.86        & 0.80   & 0.75   & 0.96   & 0.84        & 0.80   & \underline{0.74}   & 0.75   & \underline{0.76}        \\
\midrule
\textbf{LOKI} (Ours)      & \textbf{1}    & \underline{0.96 }& 0.98 & \textbf{0.98}      & \underline{0.96} & \textbf{0.91} & 0.97 & \textbf{0.95}      & \underline{0.95}  & \textbf{0.88} & 0.93 & \textbf{0.92} & \underline{0.92}& \textbf{0.86} & 0.82 & \textbf{0.87}\\
\bottomrule
\end{tabular}
}
\end{table*}

We present results for ZsRE, SelfCheckGPT, and Temporal in \cref{tab:main}, \cref{tab:hallucination}, and \cref{tab:ood_generalization}, respectively. We report deviations across independent runs for the same settings in App.~\ref{app:extended}. Overall, LOKI outperforms existing methods in most scenarios. In \cref{tab:main}, LOKI achieves top average performance across all scenarios, with a particularly stark performance gap at $\tau=1000$ on both models, achieving up to $14$\% improvement in average performance over the runner-up. While some methods perform well in only one or two metrics for each experiment, LOKI is able to balance all three consistently. In \cref{tab:hallucination}, LOKI is able to consistently {achieve the best reliability score (up to $84$\% improvement over the runner-up for $\tau=600$) and remain close to the best locality scores (less than $15$\% degradation)}. It is worth noting that methods with higher locality scores suffer from a minimum $47$\% degradation in perplexity for multiple edits. Similarly, in \cref{tab:ood_generalization}, LOKI achieves the best or second-best Reliability and Generalization score, while remaining competitive in Locality (less than $20$\% degradation), attaining the second-best average score. 

\begin{table*}[!t]
\centering
\caption{Performance comparison on the SelfCheckGPT hallucination correction task. Metrics include Reliability (Rel. $\downarrow$), as target-text perplexity (PPL) and Locality (Loc. $\uparrow$) across $\tau$ sequential edits; we omit generalization because paraphrase prompts are unavailable for this task. \textbf{Bold} is the best result, and underline is the second-best. LOKI demonstrates superior Rel. and competitive Loc. throughout the editing process.}
\resizebox{\textwidth}{!}{
\begin{tabular}{c|cc|cc|cc|cc|cc|cc|cc|cc}
\toprule
Method & \multicolumn{8}{c|}{Llama-3-8B-Instruct} & \multicolumn{8}{c}{Mistral-7B} \\
\midrule
 & \multicolumn{2}{c|}{$\tau=$ 1} & \multicolumn{2}{c|}{$\tau=$ 10} & \multicolumn{2}{c|}{$\tau=$ 100} & \multicolumn{2}{c|}{$\tau=$ 600} & \multicolumn{2}{c|}{$\tau=$ 1} & \multicolumn{2}{c|}{$\tau=$ 10} & \multicolumn{2}{c|}{$\tau=$ 100} & \multicolumn{2}{c}{$\tau=$ 600} \\
 & Rel. ($\downarrow$) & Loc. ($\uparrow$) & Rel. ($\downarrow$) & Loc. ($\uparrow$) & Rel. ($\downarrow$) & Loc. ($\uparrow$) & Rel. ($\downarrow$) & Loc. ($\uparrow$) & Rel. ($\downarrow$) & Loc. ($\uparrow$) & Rel. ($\downarrow$) & Loc. ($\uparrow$) & Rel. ($\downarrow$) & Loc. ($\uparrow$) & Rel. ($\downarrow$) & Loc. ($\uparrow$) \\
\midrule
FT & \textbf{ 1.01} & 0.95 & \underline{1.12} & 0.84 & \underline{1.44} & 0.65 & \underline{2.02} & 0.45 & \underline{1.02} & 0.77 & \underline{1.41} & 0.44 & 1.94 & 0.22 & \underline{7.26} & 0.20 \\
ROME & 1.39 & 0.97 & 4.74e1 & 0.61 & 5.00e4 & 0.02 & 2.89e4 & 0.01 & 1.98 & \underline{0.99} & 2.58 & 0.91 & 6.77e3 & 0.03 & 7.87e3 & 0.01 \\
MEMIT & 1.32 & \underline{0.99} & 1.84e3 & 0.93 & 6.25e10 & 0.01 & 3.75e4 & 0.05 & 1.59 & \textbf{1} & 3.13e2 & 0.93 & 9.03e4 & 0.00 & 3.33e6 & 0.00 \\
DEFER & 2.34e1 & 0.80 & 6.86e1 & \textbf{1} & 7.12e1 & \textbf{1} & 7.12e1 & \textbf{1} & 1.76e1 & 0.92 & 3.22e1 & \underline{0.99} & 3.22e1 & \textbf{1} & 3.22e1 & \textbf{1} \\
GRACE & 1.40 & \textbf{1} & 7.10e1 & \textbf{1} & 7.17e1 & \textbf{1} & 7.54e1 & \textbf{1} & 1.92 & \textbf{1} & 3.21e1 & \textbf{1} & 3.22e1 & \textbf{1} & 3.25e1 & \textbf{1} \\
WISE & 1.44 & 0.97 & 3.47 & 0.97 & 3.54 & \underline{0.99} & 2.63e2 & \underline{0.99} & 1.08 & \textbf{1} & 1.43e1 & 0.98 & \underline{1.74} & 0.96 & 10.0 & \underline{0.92} \\
AlphaEdit & \underline{1.29} & \textbf{1} & 1.49 & \underline{0.99} & 5.04 & 0.96 & 9.04e3 & 0.05 & 1.71 & \textbf{1} & 1.88 & \underline{0.99} & 2.96 & \underline{0.98} & 6.43e1 & 0.87 \\
%MEMOIR & 1 & 1 & 1.01 & 1 & 1.07 & 1 & 1.25 & 1 & 1 & 1 & 1.02 & 1 & 1.09 & 1 & 1.22 & 1 \\
\midrule
%\textbf{LOKI} (Ours) & \textbf{1.01} & 0.98 & \textbf{1.01} & 0.97 & \textbf{1.05} & 0.89 & \textbf{1.16} & 0.78 & \textbf{1.01} & \underline{0.99} & \textbf{1.02} & 0.96 & \textbf{1.08} & 0.87 & \textbf{1.43} & 0.72 \\
\textbf{LOKI} (Ours) & \textbf{1.01} & 0.98 & \textbf{1.01} & 0.98 & \textbf{1.03} & 0.94 & \textbf{1.14} & 0.85 & \textbf{1.01} & \underline{0.99} & \textbf{1.01} & \underline{0.99} & \textbf{1.04} & 0.94 & \textbf{1.13} & 0.87 \\
\bottomrule
\end{tabular}
}
\label{tab:hallucination}
\end{table*}

\begin{table*}[!t]
\centering
\begin{minipage}{.49\textwidth}
  \centering
  \tiny
\resizebox{\textwidth}{!}{
\begin{tabular}{l|ccc|c|ccc|c}
\toprule
\multirow{2}{*}{Method} & \multicolumn{4}{c|}{$\tau=$ 10} & \multicolumn{4}{c}{$\tau=$ 75} \\
\cmidrule(lr){2-5} \cmidrule(lr){6-9}
& Rel. & Gen. & Loc. & Avg. & Rel. & Gen. & Loc. & Avg. \\
 & ($\uparrow$) & ($\uparrow$) & ($\uparrow$) & ($\uparrow$) & ($\uparrow$) & ($\uparrow$) & ($\uparrow$) & ($\uparrow$) \\
\midrule
w/o Editing & 0.56 & 0.21 & --- & --- & 0.56 & 0.21 & --- & --- \\
\midrule
FT & 0.96 & \underline{0.35} & 0.69 & 0.67 & 0.93 & 0.32 & 0.62 & 0.62 \\
ROME & 0.10 & 0.00 & 0.06 & 0.05 & 0.02 & 0.00 & 0.02 & 0.01 \\
MEMIT & 0.82 & 0.28 & \underline{0.99} & 0.70 & 0.43 & 0.14 & 0.49 & 0.35 \\
DEFER & 0.66 & 0.29 & 0.81 & 0.59 & 0.55 & 0.21 & 0.95 & 0.57 \\
GRACE & 0.59 & 0.22 & \textbf{1} & 0.60 & 0.56 & 0.22 & \textbf{1} & 0.59 \\
WISE & \underline{0.99} & \textbf{0.36} & 0.93 & \textbf{0.76} & \textbf{0.98} & \textbf{0.38} & \underline{0.99} & \textbf{0.78} \\
AlphaEdit & 0.90 & 0.27 & \underline{0.99} & 0.72 & \underline{0.87} & 0.28 & 0.97 & 0.71 \\
\midrule
\textbf{LOKI} (Ours) & \textbf{1} & \textbf{0.36} & 0.82 & \underline{0.73} & \textbf{0.98} & \underline{0.36} & 0.81 & \underline{0.72} \\
\bottomrule
\end{tabular}}
% }
  \captionof{table}{OOD performance on the Temporal dataset (GPT-J-6B). Metrics include Reliability (Rel.~$\uparrow$), OOD Generalization (Gen.~$\uparrow$), and Locality (Loc.~$\uparrow$) across $\tau$ sequential edits. \textbf{Bold} is the best result, and underline is the second-best. LOKI maintains high Rel., and Gen., and competitive Loc. across the editing sequence.}
\label{tab:ood_generalization}
\end{minipage}%
\hfill
\begin{minipage}{.49\textwidth}
  \centering
  \includegraphics[width=0.85\linewidth]{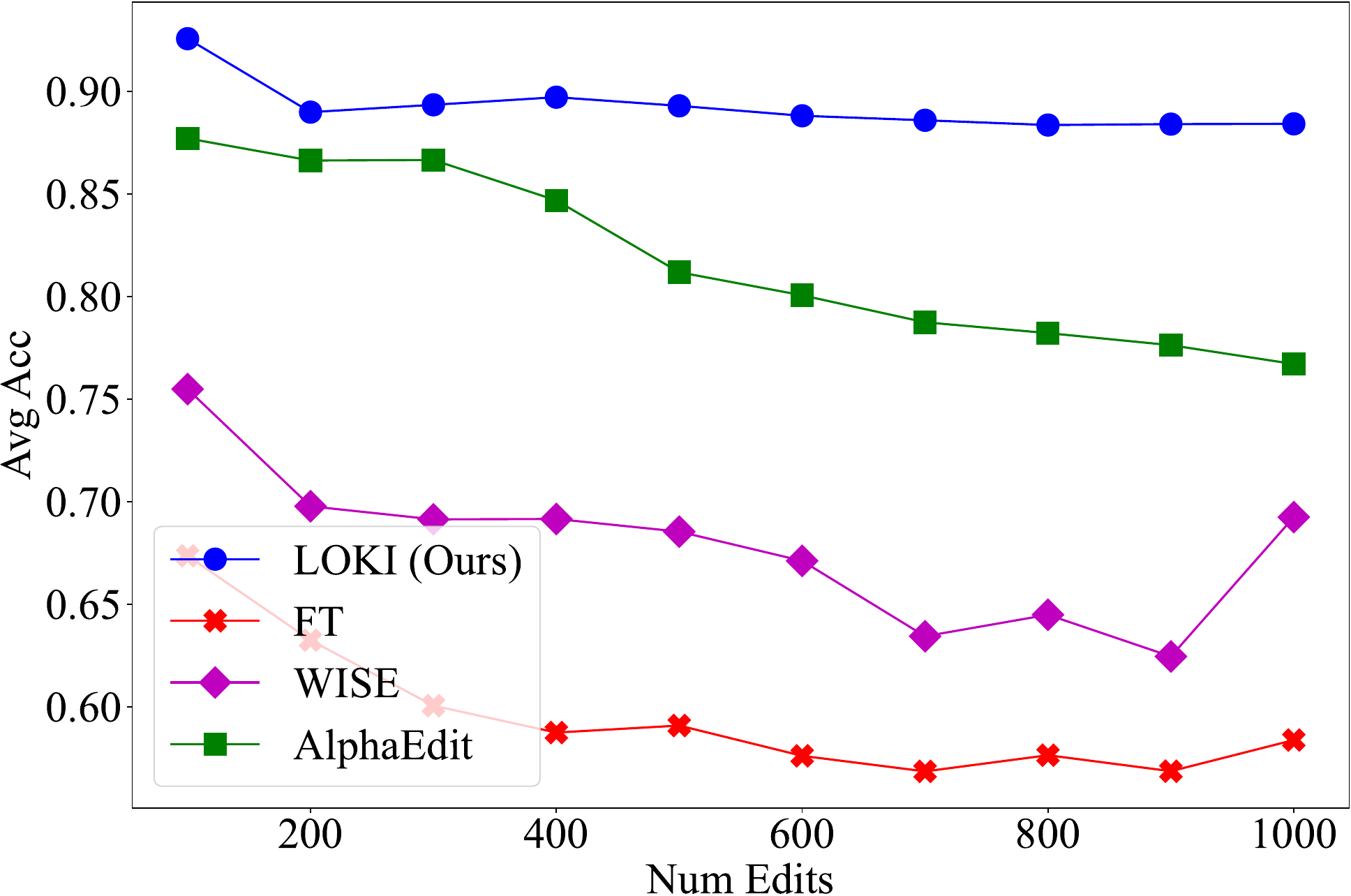}
  \captionof{figure}{Average accuracy of various methods every 100 edits for 1000 sequential edits on the  Llama-3-8B-Instruct and ZsRE dataset. LOKI suffers the least performance degradation over time across competitors.}
  \label{fig:overtime}
\end{minipage}
\end{table*}

\paragraph{Runtime Efficiency.} We report runtime in \Cref{fig:edit_time_comparison} (App.~\ref{app:impdetails}), which measures average edit time per sample on SelfCheckGPT over 600 edits. Even without preprocessing cost, LOKI (17.6s) is on average faster than WISE (19.9s) and AlphaEdit (20.2s), the most competitive methods across our tested benchmarks. Accounting for preprocessing makes AlphaEdit roughly $2\times$ slower than LOKI. % only on AlphaEdit's edited layers

 \paragraph{Evolution over Tasks and Layer Selection Frequency.} It is also of interest to measure performance across tasks in regular intervals. In \cref{fig:overtime}, we visualize the average performance of LOKI, WISE, AlphaEdit and FT every 100 tasks across 1000 tasks, on the ZsRE dataset and the Llama-3 model. LOKI maintains superior accuracy over time and suffers less deterioration compared to competitors. %We observe that not only does LOKI achieve the best final performance, but it also maintains superior performance over time and suffers less deterioration compared to competitors.
%\paragraph{Layer Selection Frequency.} 
We also study the frequency with which different layers are selected using LOKI during editing. \Cref{fig:freqs} shows layer choice frequencies for Llama-3 across 1000 sequential edits on the ZsRE dataset, where each layer's selection rate is shown as a percentage; layers used by AlphaEdit/MEMIT are depicted in red for reference. We observe that LOKI prefers to edit the later layers quite heavily and sometimes chooses the mid-to-late layers.

\paragraph{Null-space Overlap  and Feature Preservation.} We empirically study the overlap between the weight null-space and the feature null-space as in AlphaEdit~\cite{fang2024alphaedit}. We measure this overlap using the Grassmannian distance between subspaces, and comparing the overlap between the weight null-space and the feature null-space, to that of a random subspace and the feature null-space. In \cref{tab:grassmann}, we observe that the weight null-space  significantly  overlaps with  the feature null-space, validating {our claims} in \cref{thm:null}.
%
%\paragraph{Feature preservation.} 
We also conduct a qualitative study into the model features resulting from LOKI. In \cref{fig:embeds}, we visualize the last token of the Llama-3 model embeddings for the ZsRE dataset before and after 1000 sequential edits using TSNE. We see that similar to AlphaEdit, LOKI is able to avoid the heavy shift in model features observed using FT, without having to compute the feature null-space (\cref{eq:alphanull}) as AlphaEdit does. This further validates our intuition about the effectiveness of weight null-space projection for preserving past knowledge.

% \begin{figure}[h!]
%     \centering
%     \includegraphics[width=0.49\linewidth]{figs/embeds.pdf}
%     \caption{TSNE visualization of Llama-3-8B-Instruct final layer embeddings before and after 1000 sequential edits on the ZsRE dataset. LOKI manages to avoid a significant shift in model embeddings without access to previous knowledge.}
%     \label{fig:embeds}
% \end{figure}

% \begin{figure}[h!]
%     \centering
%     \includegraphics[width=0.49\linewidth]{figs/freqs.pdf}
%     \caption{Visualization of the layer selection frequency for Llama-3-8B-Instruct during 1000 sequential edits. The horizontal axis represents each layer, and the vertical axis shows what percentage of edits chose that layer. Unlike AlphaEdit/MEMIT, LOKI chooses editing layers per-sample, and prefers the mid-to-late layers.}
%     \label{fig:freqs}
% \end{figure}

% \begin{figure}[h!]
%     \centering
%     \includegraphics[width=0.49\linewidth]{figs/overtime.pdf}
%     \caption{Average accuracy of various methods every 100 edits for 1000 sequential edits on the  Llama-3-8B-Instruct and ZsRE dataset. LOKI suffers the least performance degradation over time across competitors.}
%     \label{fig:overtime}
% \end{figure}

% Side-by-side figures
\begin{figure}[!t]
\centering
\begin{minipage}{.49\textwidth}
  \centering
  \includegraphics[width=\linewidth]{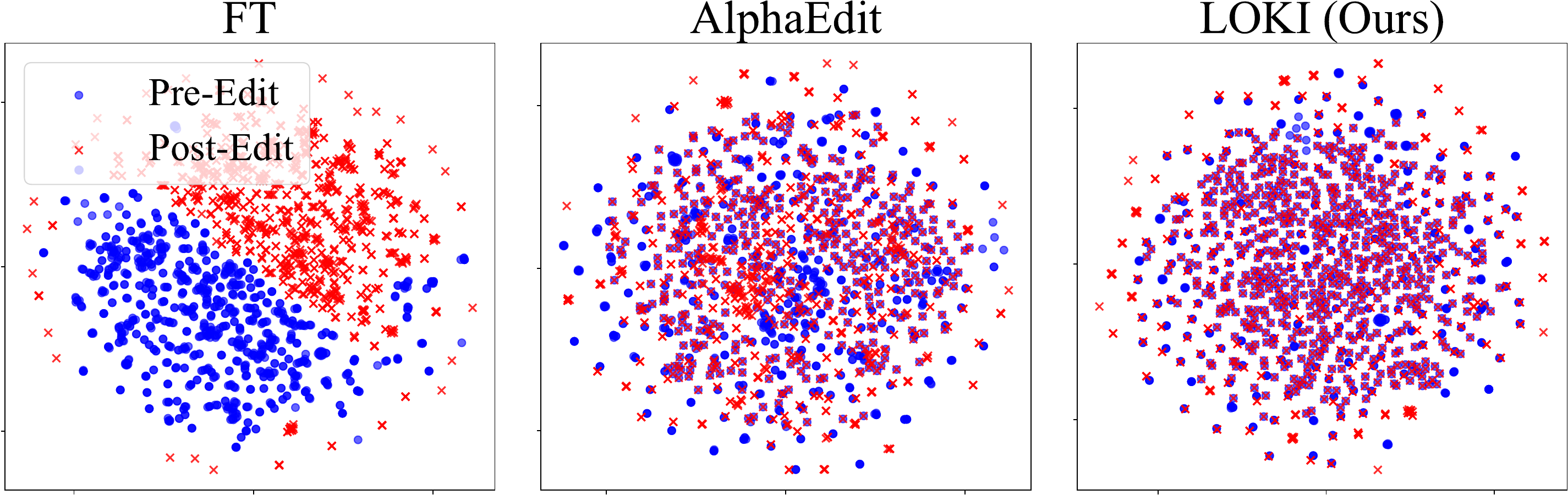}
  \captionof{figure}{TSNE visualization of Llama-3-8B-Instruct final layer embeddings before and after 1000 sequential edits on the ZsRE dataset. LOKI manages to avoid a significant shift in model embeddings without access to previous knowledge.}
  \label{fig:embeds}
\end{minipage}%
\hfill
\begin{minipage}{.49\textwidth}
  \centering
  \tiny
  \begin{tabular}{c|c|c|c|c|c|c}
    \toprule
       \multicolumn{2}{c|}{Layer}  &  4 & 5 & 6 & 7 & 8\\
       \midrule

        \multirow{2}{*}{Llama-3}& W-Null (Ours) & \textbf{2.42} &\textbf{3.19} & 
        \textbf{2.56} & \textbf{2.40} & \textbf{2.42}\\
        & Random & 4.28 & 5.01 & 4.18 & 3.69 & 4.82\\
        \midrule
        \multirow{2}{*}{Mistral-7B}& W-Null (Ours) & \textbf{2.88} &\textbf{2.85} & 
        \textbf{2.56} & \textbf{2.41} & \textbf{2.79}\\
        & Random & 4.68 & 4.43 & 4.10 & 3.59 & 4.33 \\
        \bottomrule
    \end{tabular}
  \captionof{table}{Grassmannian distance between the null-space of the weights (our proposed method) and the feature null-space (as used by AlphaEdit) compared to a random subspace of the same dimension. The weight null-space has clearly lower distance to the feature null-space.}
  \label{tab:grassmann}
\end{minipage}
\end{figure}

\paragraph{Ablation Study.}
We perform an ablation study of all major components of LOKI: the weight null-space projection (Null), the dynamic layer selection (Layer), and the added HSIC bottleneck regularization loss term (Reg.). When null-space projection is absent, we don't project the loss gradients during optimization. When the layer selection is absent, we always optimize the same layers as AlphaEdit, and when the regularization term is absent, we do not add it to the optimization loss function. \Cref{tab:ablation} shows the results on the ZsRE dataset for 1000 sequential edits on the Llama-3-8B-Instruct architecture. We observe that all three components have an overall positive effect on average performance when added, with the effect of the weight null-space projection being especially high, followed by the layer selection, and finally a marginal improvement with the HSIC regularization term. This further validates our intuition about the weight null-space projection, as well as the design of our layer selection criterion. We experiment further on the effects of layer selection in App.~\ref{app:combining}.

\begin{figure}[!t]
\centering
\begin{minipage}{.49\textwidth}
  \centering
  \tiny
    
  \includegraphics[width=\linewidth]{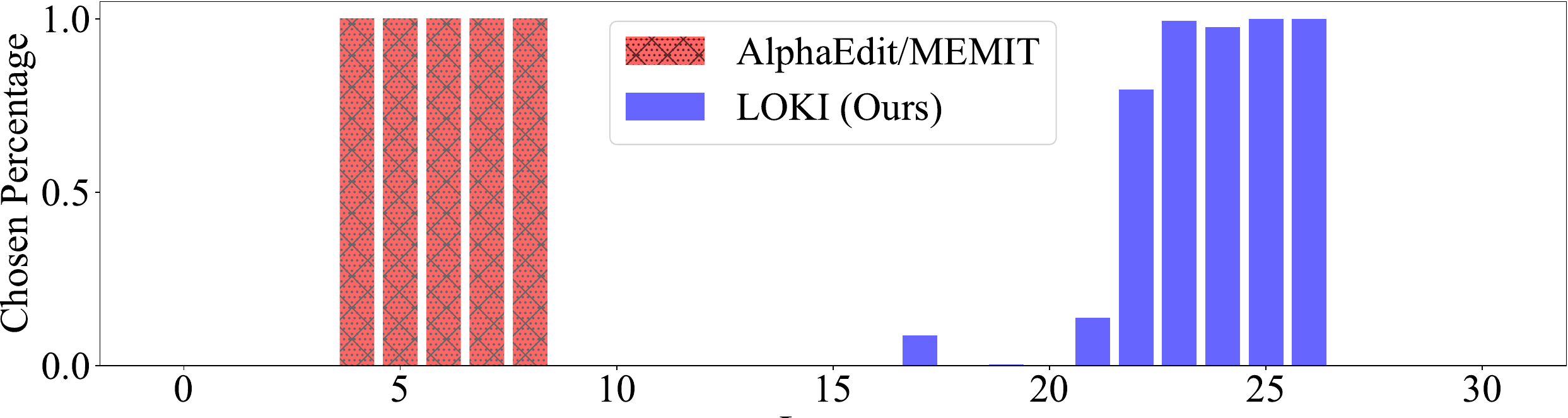}
  \captionof{figure}{Visualization of the layer selection frequency for Llama-3 during 1000 sequential edits. The horizontal axis represents layers, and the vertical axis shows what percentage of edits chose that layer. Unlike AlphaEdit/MEMIT, LOKI chooses editing layers per-sample, and prefers the mid-to-late layers.}
  \label{fig:freqs}
\end{minipage}%
\hfill
\begin{minipage}{.49\textwidth}
  \centering
  \tiny
  \begin{tabular}{c|c|c|c|c|c|c}
\toprule
Reg. & Layer & Null & Rel. ($\uparrow$)  & Gen. ($\uparrow$)  & Loc. ($\uparrow$)  & Avg. ($\uparrow$)  \\
\midrule
    &      &     & 0.66 & 0.62 & 0.2  & 0.5  \\
    &      & \cmark    & 0.78 & 0.63 & 0.85 & 0.76 \\
    & \cmark     &     & 0.82 & 0.79 & 0.41 & 0.68 \\
    & \cmark     & \cmark    & \textbf{0.91} & \textbf{0.87} & 0.87 & 0.88 \\
\cmark    &      &     & 0.71 & 0.67 & 0.26 & 0.55 \\
\cmark    &      & \cmark    & 0.81 & 0.65 & 0.86 & 0.77 \\
\cmark    & \cmark     &     & 0.83 & 0.82 & 0.49 & 0.71 \\
\midrule
\cmark    & \cmark     & \cmark    & \textbf{0.91} & \textbf{0.87 }& \textbf{0.88} & \textbf{0.89}\\
\bottomrule
\end{tabular}
  \captionof{table}{Ablation study of LOKI components on the ZsRE dataset ($\tau=1,000$, Llama). Metrics include Reliability (Rel.~$\uparrow$), Generalization (Gen.~$\uparrow$), Locality (Loc.~$\uparrow$), and their mean (Avg.~$\uparrow$). \textbf{Bold} is the best result. The full configuration achieves the highest scores.}
  \label{tab:ablation}
\end{minipage}
\end{figure}
% \begin{table}[h!]
% \centering
% \scriptsize
% \label{tab:ablation}
% \caption{Ablation study of LOKI components on the ZsRE dataset ($\tau=1,000$, Llama-3-8B-Instruct). Metrics include Reliability (Rel.), Generalization (Gen.), and Locality (Loc.), with Avg. representing their mean. Bold indicates the best result for each metric. The results demonstrate that each component contributes to the final performance, with the full configuration achieving the highest scores across all metrics.}
% \begin{tabular}{c|c|c|c|c|c|c}
% \toprule
% HSIC & Layer & Null & Rel  & Gen  & Loc  & Avg  \\
% \midrule
%     &      &     & 0.66 & 0.62 & 0.2  & 0.5  \\
%     &      & \cmark    & 0.78 & 0.63 & 0.85 & 0.76 \\
%     & \cmark     &     & 0.82 & 0.79 & 0.41 & 0.68 \\
%     & \cmark     & \cmark    & \textbf{0.91} & \textbf{0.87} & 0.87 & 0.88 \\
% \cmark    &      &     & 0.71 & 0.67 & 0.26 & 0.55 \\
% \cmark    &      & \cmark    & 0.81 & 0.65 & 0.86 & 0.77 \\
% \cmark    & \cmark     &     & 0.83 & 0.82 & 0.49 & 0.71 \\
% \midrule
% \cmark    & \cmark     & \cmark    & \textbf{0.91} & \textbf{0.87 }& \textbf{0.88} & \textbf{0.89}\\
% \bottomrule
% \end{tabular}
% \end{table}

\section{Conclusion}

The major challenge in lifelong knowledge editing is to edit knowledge efficiently and sequentially, without relying on external memory, additional parameters, or extensive preprocessing. We attempt to address this challenge by proposing LOKI, a novel knowledge editing method that directly modifies model weights without auxiliary parameters. LOKI performs per-sample layer selection using an information bottleneck criterion and utilizes the model weight null-space to avoid requiring access to previous knowledge samples and features. We empirically validate the performance of LOKI in a variety of scenarios, observing up to a $14\%$ increase in performance compared to existing methods. Additionally, we perform exploratory experiments and ablation studies, validating our intuitions behind our design choices. Finally, we discuss limitations and future work in App.~\ref{app:limit}.

% In the unusual situation where you want a paper to appear in the
% references without citing it in the main text, use \nocite
% \nocite{langley00}
% \section*{Impact Statement}
% In this work, we present LOKI, a novel knowledge editing method that aims to facilitate the incorporation of new knowledge into LLMs. As LLMs see widespread use in society, knowledge editing in LLMs can be beneficial in a variety of aspects: fixing model mistakes~\cite{balachandran2022correcting}, correcting hallucinations~\cite{ji2023survey}, or removing bias~\cite{ferrara2023should}. However, as with any algorithm capable of modifying LLM behaviors, knowledge editing methods can be misused. For example, models can be edited to encourage toxic behavior, generate misinformation, or use inappropriate language. These risks are not exclusive to LOKI, and we believe the behavior and usage of LLMs should be sufficiently regulated to prevent misuse, including with knowledge editing. Overall, we believe that the benefits of LOKI are substantial enough to outweigh potential risks and that LOKI does not raise any new or exclusive ethical concerns.
% \section*{References}
\section*{Acknowledgments}
The authors gratefully acknowledge support from the National Science Foundation through grants 2414652 and 2112471.

\bibliography{example_paper}
\bibliographystyle{plain}

%%%%%%%%%%%%%%%%%%%%%%%%%%%%%%%%%%%%%%%%%%%%%%%%%%%%%%%%%%%%
\newpage
\appendix

\section{HSIC Definition}
\label{app:hsic}

Let $\mathcal{F}$ and $\mathcal{G}$ be reproducing kernel Hilbert spaces (RKHSs) 
on domains $\mathcal{X}$ and $\mathcal{Y}$ with associated kernels $k: \mathcal{X} 
\times \mathcal{X} \rightarrow \mathbb{R}$ and $l: \mathcal{Y} \times \mathcal{Y} 
\rightarrow \mathbb{R}$. The Hilbert-Schmidt Independence Criterion (HSIC) between 
random variables $X$ and $Y$ with joint distribution $p_{XY}$ is defined as the 
squared Hilbert-Schmidt norm of the cross-covariance operator $C_{XY}$:
\begin{align}
    \mathtt{HSIC}(X, Y) := \|C_{XY}\|^2_{HS}
\end{align}

Given $m$ i.i.d. samples $\{(x_i, y_i)\}_{i=1}^m$ from $p_{XY}$, the empirical 
estimator is:
\begin{align}
    \widehat{\mathtt{HSIC}}(X, Y) = \frac{1}{(m-1)^2} \mathrm{tr}(KHLH)
\end{align}
where $K_{ij} = k(x_i, x_j)$, $L_{ij} = l(y_i, y_j)$ are the Gram matrices, and 
$H = I - \frac{1}{m}\mathbf{1}\mathbf{1}^\top$ is the centering matrix. With 
universal kernels (e.g., Gaussian), $\mathtt{HSIC}(X, Y) = 0$ if and only if $X$ 
and $Y$ are statistically independent~\cite{gretton2005hsic}.

\section{Proofs}

\WeightNullspace*
\label{app:proof}
\begin{proof}
For the sake of brevity, we will omit the $l$ superscript and $0$ subscript. Let $\hat{z}_{i,1}, ..., \hat{z}_{i,k}$ be the vectors corresponding to the $i$-th sample. Let $z_{i,1},...,z_{i,k}$ be the corresponding output tokens of $W_{\mathtt{out}}^{(l)}$. Assuming loss function $\mathcal{L}_i$, and learning rate $\alpha_i$, the gradient update to $W_{\mathtt{out}}$ is as follows due to the chain rule, with a slight abuse of notation:
\begin{align}
    \frac{\partial\mathcal{L}_i}{\partial W_{\mathtt{out}}} = \sum_{j=1}^{k_i} \frac{\partial\mathcal{L}_i}{\partial{z_{i,j}}}\cdot\frac{\partial z_{i,j}}{\partial W_{\mathtt{out}}} = \sum_{j=1}^{k_i} \frac{\partial\mathcal{L}_i}{\partial{z_{i,j}}}\cdot\frac{\partial W_{\mathtt{out}}^{(l)} \hat{z}_{i,j}}{\partial W_{\mathtt{out}}^{(l)}}  = \sum_{j=1}^{k_i}\frac{\partial\mathcal{L}_i}{\partial z_{i,j}} \hat{z}_{i,j}^\top
\end{align}
Therefore, we can write $W_{\mathtt{out}}$ as the culmination of all the gradient updates $1,...,n$, as
\begin{align}
    W_{\mathtt{out}} = W_{0} + \sum_{i=1}^n\alpha_i \cdot \sum_{j=1}^{k_i}\frac{\partial\mathcal{L}_i}{\partial z_{i,j}} \hat{z}_{i,j}^\top = W_{0} + \sum_{i=1}^n\sum_{j=1}^{k_i}\alpha_i \cdot \frac{\partial\mathcal{L}_i}{\partial z_{i,j}} \hat{z}_{i,j}^\top 
\end{align}
Where $W_0$ is the initialization. We can take the RHS to be a single $n\times k$ sum, and write it in matrix form:
\begin{align}
    W_{\mathtt{out}} = W_0 + (\partial Z) A \hat{Z}^\top
\end{align}
where $A \in \mathbb{R}^{s\times s}$ is a diagonal matrix, with elements $\alpha_i$ repeated $k_i$ times each with $s:=\sum k_i$, $\partial Z\in \mathbb{R}^{d_1 \times s}$ is a matrix with $\frac{\partial\mathcal{L}_i}{\partial z_{i,j}} $ as its columns, and $\hat{Z} \in \mathbb{R}^{d_2\times s}$ is a matrix with $\hat{z}_{i,j}$ as its columns. Now assuming the initialization $W_0$ is small or zero, i.e., $W_0 \approx 0$, we can write:
$$W_{\mathtt{out}} = (\partial Z) A \hat{Z}^\top$$
which completes the proof, since  $\hat{Z}^\top$ is the rightmost term of the product, and any vector $\mathbf{v}$ such that $\hat{Z}^\top \mathbf{v} = \mathbf{0}$ (i.e., $\mathbf{v} \in \mathtt{Null}(\hat{Z}^\top)$) must also satisfy $W_{\mathtt{out}} \mathbf{v} = (\partial Z) A (\hat{Z}^\top \mathbf{v}) = \mathbf{0}$. Therefore, the null space of $\hat{Z}^\top$ is necessarily contained within the null space of $W_{\mathtt{out}}$, satisfying $\mathtt{Null}(\hat{Z}^\top) \subset \mathtt{Null}(W_{\mathtt{out}})$.

\end{proof} 

%%%%%%%%%%%%%%%%%%%%%%%%%%%%%%%%%%%%%%%%%%%%%%%%%%%%%%%%%%%%%%%%%%%%%%%%%%%%%%%

\section{Implementation Details}
\label{app:impdetails}

We base our implementation on the EasyEdit framework~\cite{wang2023easyedit}, available at \url{https://github.com/zjunlp/EasyEdit}, the code for our method is available in the supplementary material under the LOKI folder(s) in the $\mathtt{easyeditor/models}$, and $\mathtt{hparams}$ directories, and the code to run our experiments can be found in $\mathtt{run\_zsre\_llama2.py}$ and $\mathtt{run\_wise\_editing.py}$ files in the $\mathtt{examples}$ folder. We will make this code public upon acceptance. All experiments are executed on NVIDIA H200 GPUs using three model architectures: Llama-3-8B-Instruct, Mistral-7B-v0.1, and GPT-J-6B.
For all experiments, we perform edits sequentially with a batch size of 1, ensuring the model is instantly updated and corrected after each input following the lifelong model editing setting~\cite{hartvigsen2023aging, wang2024wise}.

To mitigate the variance introduced by individual edit samples in experiments involving a small number of sequential edits, we report results averaged over multiple independent runs. In Table~\ref{tab:main}, the total number of sequential edits $\tau$ increases from 1 to 1,000 for the ZsRE dataset, and from 1 to 600 for the SelfCheckGPT dataset in Table~\ref{tab:hallucination}. For experiments involving a small number of edits (typically from $\tau = 1$ up to $\tau = 100$), we report results averaged over multiple runs using distinct edit samples. For example, results for $\tau = 1$ on the ZsRE dataset are averaged over 1,000 independent experiments, each using a different edit sample. Similarly, for $\tau = 10$ on the SelfCheckGPT dataset, we average results over 60 runs with distinct edit samples. For the Temporal dataset (Tables~\ref{tab:ood_generalization} and \ref{tab:ood_generalization_sd}), results for $\tau=10$ are averaged over 7 independent experiments. These averaging procedures are consistent across the main results and the extended results in App.~\ref{app:extended}.

\subsection{Edit time comparison}

\begin{figure}[tbp]
    \centering
    \includegraphics[width=\linewidth]{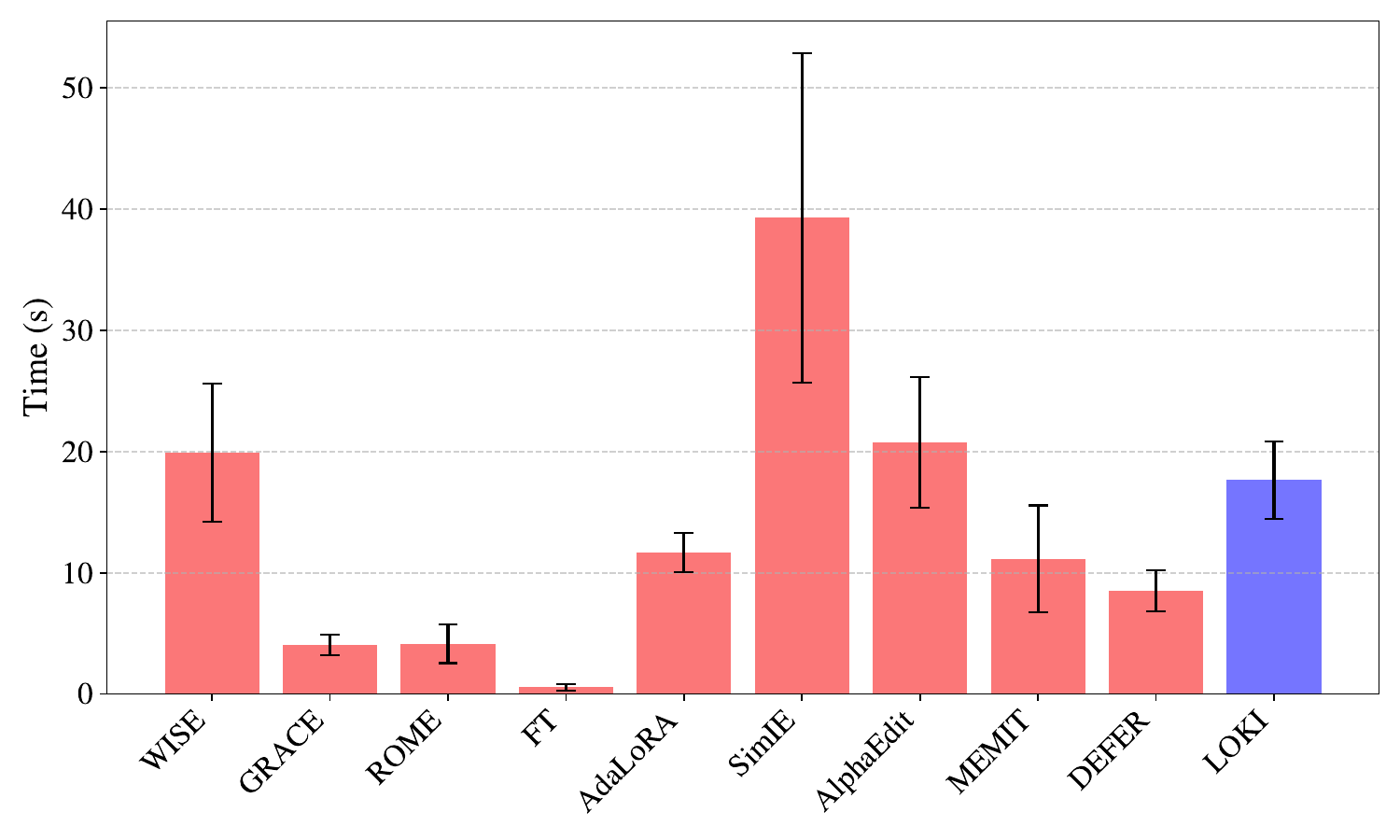}
    \caption{Average edit time per method measured on an NVIDIA H200 GPU. We report the average time per edit with standard deviation across 600 edits on the SelfCheckGPT dataset. LOKI (blue bar) is faster than WISE and AlphaEdit, the most competitive baselines, while incurring some overhead compared to simpler methods. This overhead stems from LOKI's dynamic layer selection process and the computation of the null space for the $W_{out}$ matrix during each edit. Note that, unlike MEMIT and AlphaEdit, LOKI requires no preprocessing; the runtimes shown for those methods exclude their one-time covariance (and, for AlphaEdit, null-space projection) computation, which would substantially increase their effective cost (see App.~\ref{app:impdetails}).}
    \label{fig:edit_time_comparison}
\end{figure}

Figure~\ref{fig:edit_time_comparison} illustrates the edit execution times. LOKI averages $17.6$ seconds per edit, outperforming WISE and AlphaEdit in speed but slower than the other baselines; these are the mean durations of 600 individual edits on Llama-3-8B-Instruct with the SelfCheckGPT dataset. The runtimes plotted for MEMIT and AlphaEdit \textbf{do not} include the amortized one-time cost of gathering past-knowledge covariance statistics (and, for AlphaEdit, of building null-space projections $P$ on top of them), so fair comparison would shift both methods much slower once preprocessing is counted. 

Table~\ref{tab:covariance_build} gives estimated wall-clock for that preprocessing alone---Wikipedia, 100k samples, Llama-3-8B-Instruct---on H200 and A100: across all 32 layers the EasyEdit~\cite{wang2023easyedit} pipeline we use totals about $\sim$25\,h (H200) and $\sim$56\,h (A100), with a layer-wise breakdown in the table. There, statistics are collected \emph{separately for each target layer} (the corpus is iterated per layer), each pass performing a \emph{partial forward} from input embeddings through the transformer only as far as the rewrite module for that layer; the forward ends right after that module so the network tail is not executed, yet deeper rewrite modules still require more compute per batch than earlier ones. AlphaEdit repeats this covariance accumulation and adds the per-layer projection $P$, whose incremental cost is noted in the table caption. Overall, LOKI avoids this costly preprocessing: it does not iterate a large corpus to build layer-wise covariance or feature null-spaces from past activations, and instead obtains its projections directly from the model weights at edit time. When this preprocessing is amortized across 600 edits, both methods remain substantially slower than LOKI: AlphaEdit is $1.94\times$/$9.75\times$ slower on H200 (only 5 edited layers/all 32) and $2.96\times$/$20.57\times$ slower on A100, while MEMIT is $1.38\times$/$9.10\times$ slower on H200 and $2.38\times$/$19.82\times$ slower on A100.

\begin{table}[!t]
\centering
\caption{Covariance matrix build times on H200 vs.\ A100 (Llama-3-8B-Instruct, Wikipedia, 100k samples). MEMIT and AlphaEdit both require these covariance statistics; AlphaEdit additionally builds a null-space projection matrix $P$ per layer, adding $\sim$$30$--$37$\,s/layer on H200 and $\sim$$60$--$72$\,s/layer on A100 ($\sim$$16$--$20$\,min and $\sim$$34$--$38$\,min total for all 32 layers, respectively).}
\label{tab:covariance_build}
\begin{tabular}{lrr}
\toprule
\textbf{Layer} & \textbf{Est.\ full (min) H200} & \textbf{Est.\ full (min) A100} \\
\midrule
0  & 13.3 & 34.2  \\
4  & 21.8 & 52.4  \\
9  & 32.5 & 75.1  \\
13 & 41.2 & 93.6  \\
18 & 51.9 & 116.6 \\
22 & 60.5 & 135.2 \\
27 & 71.3 & 159.5 \\
31 & 80.2 & 177.8 \\
\midrule
\textbf{All 32} & \textbf{$\sim$1{,}490 ($\sim$25 h)} & \textbf{$\sim$3{,}375 ($\sim$56 h)} \\
\bottomrule
\end{tabular}
\end{table}

\subsection{Implementation of LOKI}

LOKI performs knowledge editing through a multi-stage process. First, it dynamically selects critical layers using Hilbert-Schmidt Independence Criterion (HSIC) to identify layers with maximal information flow between input and output activations. The method then computes null-space projection matrices for each selected layer directly from the weight matrices via singular value decomposition, ensuring new edits are projected onto subspaces orthogonal to the weight matrix structure. Weight updates are optimized via Adam with a combined loss function that includes: (1) cross-entropy loss on target tokens, (2) KL divergence regularization to preserve model behavior on unrelated inputs, and (3) HSIC-based bottleneck constraints to maintain information flow through intermediate layers.

Below are several implementation details regarding the different components of LOKI:
\begin{itemize}
    \item As mentioned above, we use an additional KL divergence loss term similar to that of~\cite{meng2022mass} to preserve the KL divergence of model responses to the prompt "subject is a \{insert subject\}", in order to regularize the model further. The $kl\_factor$ hyperparameter is the coefficient of this loss term, added to the objective in \cref{eq:finalopt}.
    \item Additionally, similar to~\cite{meng2022locating} once again, we prepend each input $X_t$ with several (typically 5) prompts generated by the model from scratch, to improve generalization. The resulting inputs are then treated as a batch and used in the loss terms instead of $X_t$ by itself. That is $\bar{X}_t = [[G_1, X_t], [G_2,X_t], ...]$, where $G_i$s are generated by the LLM from scratch.
    \item The above generalization also serves as samples for the HSIC criterion. As HSIC measures the independence between distributions, it requires multiple samples from each distribution. The embeddings of each input prepended by a prompt are averaged over the dimension of the tokens and each used as one sample for the HSIC criterion in \cref{eq:hib}.
    %\item We empirically find that the null-space matrix in \cref{eq:lokinull} is more effective at reducing interference if it is low-rank. This is not surprising since it aligns with the fact that the desired subspace is a subset of the weight null-space, and that low-rank updates in general tend to interfere less with past knowledge, as is exploited in LoRA fine-tuning. Therefore, we use the first 1000 (represented by the $null\_dim$ hyperparameter) bases of the zero singular value bases of the $V^{(l)}$ matrix for calculating $P^{(l)}$, that is: $P^{(l)} = \bar{V}^{(l)}[:1000]\bar{V}^{(l)}[:1000]^\top$
    \item We empirically find that the null-space matrix in \cref{eq:lokinull} tends to be more effective at reducing interference if it is low-rank. This is not surprising since it aligns with the fact that the desired subspace is a subset of the weight null-space, and that low-rank updates in general tend to interfere less with past knowledge, as is exploited in LoRA fine-tuning. Therefore, we retain the first $null\_dim$ bases of the zero singular value bases of the $V^{(l)}$ matrix for calculating $P^{(l)}$, that is: $P^{(l)} = \bar{V}^{(l)}[:,:null\_dim]\bar{V}^{(l)}[:,:null\_dim]^\top$, where $null\_dim$ is task-specific (\cref{tab:loki_hyperparams}).
    \item The HSIC regularization term added to the objective in \cref{eq:finalopt} is slightly modified to use the output of a linear transform $Z$ rather than previously $\hat{Z}$ so that both terms are differentiable w.r.t. to the modified weights. (Note that the layer selection term is exactly as described previously, and only the regularization term is changed slightly.) 
\end{itemize}

%Table~\ref{tab:loki_hyperparams} summarizes the hyperparameters used across all three models. Hyperparameter exploration was conducted for Llama-3-8B-Instruct on the ZsRE dataset. Other models were not further explored, which suggests that a slight improvement could be achieved through hyperparameter exploration. For all models, we dynamically select 3 layers with 25 optimization steps. The null-space dimension is set to 1000 and HSIC regularization coefficients $h_x$ and $h_y$ are 0.001. All models use a weight learning rate of $1 \times 10^{-4}$. 

We determine LOKI hyperparameters in three stages. For ZsRE (Tables~\ref{tab:main} and~\ref{tab:main_sd}), we search over candidate settings on Llama-3-8B-Instruct and deploy the best-performing configuration for both Llama and Mistral without additional per-model tuning. For SelfCheckGPT (Table~\ref{tab:hallucination} and~\ref{tab:hallucination_sd})) and Temporal (Table~\ref{tab:ood_generalization}and~\ref{tab:ood_generalization_sd})), we use the standard training and evaluation splits. Specifically, we run a bounded search on the training split, select the configuration that optimizes the in-split objective for each backbone, and evaluate that single configuration on the held-out split used in the tables. Table~\ref{tab:loki_hyperparams} collects every numerical setting that enters the reported LOKI runs.

% \begin{table}[htbp]
% \centering
% \caption{Default LOKI hyperparameters for ZsRE (Llama-3-8B-Instruct and shared settings for other models in Table~\ref{tab:main} unless task-specific values above apply).}
% \label{tab:loki_hyperparams}
% \begin{tabular}{lc}
% \hline
% \textbf{Hyperparameter} & \textbf{Parameter Value}\\ %\textbf{Llama-3-8B} & \textbf{Mistral-7B} & \textbf{GPT-J-6B} \\
% \hline
% Number of updated Layers & 3 \\ %& 3 & 5 \\
% Optimization steps ($num\_steps$) & 25 \\ %& 25 & 30 \\
% Weight learning rate ($w\_lr$) & $1 \times 10^{-4}$ \\ %& $1 \times 10^{-4}$ & $5 \times 10^{-5}$ \\
% Null-space dimension ($null\_dim$) & 1000 \\ %& 1000 & 1500 \\
% HSIC coefficient $h_x$ & 0.001 \\ %& 0.001 & 0.001 \\
% HSIC coefficient $h_y$ & 0.001 \\ %& 0.001 & 0.005 \\
% HSIC bandwidth ($hsigma$) & 1.0 \\ %& 1.0 & 1.0 \\
% KL regularization ($kl\_factor$) & 0.02 \\ %& 0.02 & 0.05 \\
% Norm constraint ($norm\_constraint$) & 0.05 \\ %& 0.05 & 0.01 \\
% %L1 regularization ($l1\_lambda$) & 0.01 \\ %& 0.01 & 0.01 \\
% %L2 regularization ($L2$) & 10 \\ %& 10 & 10 \\
% \hline
% \end{tabular}
% \end{table}

\begin{table}[!t]
\centering
\footnotesize
\setlength{\tabcolsep}{3pt}
\caption{LOKI hyperparameters for each benchmark and backbone reported in the main results. The ZsRE column is shared by Llama-3-8B-Instruct and Mistral-7B-v0.1; SelfCheckGPT lists separate settings for each model; Temporal lists GPT-J-6B.}
\label{tab:loki_hyperparams}
\begin{tabular}{@{}p{4.35cm}cccc@{}}
\toprule
 & \textbf{ZsRE} & \multicolumn{2}{c}{\textbf{SelfCheckGPT}} & \textbf{Temporal} \\
\cmidrule(lr){2-2} \cmidrule(lr){3-4} \cmidrule(lr){5-5}
\textbf{Hyperparameter} & Llama / Mistral & Llama-3-8B-Instruct & Mistral-7B & GPT-J-6B \\
\midrule
Number of updated Layers & 3 & 7 & 5 & 3 \\
Optimization steps ($num\_steps$) & 25 & 25 & 25 & 30 \\
Weight learning rate ($w\_lr$) & $1 \times 10^{-4}$ & $1.34 \times 10^{-4}$ & $2.58 \times 10^{-4}$ & $5.13 \times 10^{-5}$ \\
Null-space dimension ($null\_dim$) & 1000 & 1500 & 3000 & 1000 \\
HSIC coefficient $\lambda_x$ & 0.001 & $6.2 \times 10^{-4}$ & $4.46 \times 10^{-4}$ & $3.19 \times 10^{-4}$ \\
HSIC coefficient $\lambda_y$ & 0.001 & $7.05 \times 10^{-5}$ & $1.78 \times 10^{-4}$ & $6.62 \times 10^{-3}$ \\
HSIC bandwidth ($hsigma$) & 1.0 & 2.28 & 2.66 & 10.86 \\
KL regularization ($kl\_factor$) & 0.02 & 0.01 & 0.01 & 0.019 \\
Norm constraint ($norm\_constraint$) & 0.05 & $1.62 \times 10^{-1}$ & $1.37 \times 10^{-2}$ & $1.47 \times 10^{-3}$ \\
\bottomrule
\end{tabular}
\end{table}

\subsection{Competitors Hyperparameters}

\paragraph{FT.~\cite{zhang2024comprehensive}} For Llama-3-8B-Instruct and Mistral-7B-v0.1, we edit layer 21 with 50 training steps using Adam optimizer with learning rate $5 \times 10^{-4}$. For GPT-J-6B, we use layer 21 with 25 steps and the same learning rate. All models use a maximum sequence length 40.

\paragraph{AdaLoRA.~\cite{zhang2023adaptive}} We present results only on the ZsRE dataset, where we fine-tune Adaptive Low-Rank Adaptation (AdaLoRA) modules on \texttt{q\_proj} and \texttt{v\_proj}, following the default method in EasyEdit~\cite{wang2023easyedit}. We use maximum rank $r{=}8$, scaling $\alpha{=}32$, dropout $0.1$, Adam learning rate $1 {\times} 10^{-4}$, 50 optimizer steps per edit, and no weight decay.

\paragraph{ROME.~\cite{meng2022locating}} For Llama-3-8B-Instruct and Mistral-7B-v0.1, we edit layer 5 following the default settings. For GPT-J-6B, we also edit layer 5 following their implementation. All models use 25 gradient steps (20 for GPT-J-6B) with learning rate $5 \times 10^{-1}$ to compute hidden representations, and compute covariance statistics from 100,000 Wikipedia samples in float32 precision.

\paragraph{MEMIT.~\cite{meng2022mass}} For Llama-3-8B-Instruct and Mistral-7B-v0.1, we update layers [4, 5, 6, 7, 8] with covariance adjustment factor $\lambda = 15,000$. For GPT-J-6B, we target layers [3, 4, 5, 6, 7, 8] with the same $\lambda$ value. All models use 25 gradient steps with learning rate $5 \times 10^{-1}$, and 100,000 Wikipedia samples for covariance computation.

\paragraph{DEFER.~\cite{hartvigsen2023aging}} We follow the hyperparameter settings from~\cite{wang2024wise}, using learning rate $7 \times 10^{-5}$, 100 training steps per edit, and 0.5 threshold. For Llama-3-8B-Instruct and Mistral-7B-v0.1, we edit layer 27; and for GPT-J-6B, layer 25, following the implementation in~\cite{wang2023easyedit}.

\paragraph{GRACE.~\cite{hartvigsen2023aging}} Consistent with the original work, we set the learning rate to 1.0 and use the replace\_last strategy, modifying only the final token's activations in autoregressive settings. The deferral radius $\epsilon$ is set to 1.0 for all models. We use 50 training iterations for Llama-3-8B-Instruct and Mistral-7B-v0.1, and 200 for GPT-J-6B. The edited layers are 29 for Llama-3-8B-Instruct, 29 for Mistral-7B-v0.1, and 25 for GPT-J-6B.

\paragraph{WISE.~\cite{wang2024wise}} We adopt the hyperparameter settings from~\cite{wang2024wise}. For Llama-3-8B-Instruct, we use SGD optimizer with learning rate $\eta = 1.0$, mask ratio $\rho = 0.2$, activation margins $\alpha = 5.0$, $\beta = 20.0$, $\gamma = 10.0$, activation ratio 0.5, merge weight 0.5, and 30 training steps. For Mistral-7B-v0.1, we use the same configuration except 70 training steps, activation ratio 0.88, and merge weight 0.5. For GPT-J-6B, we use 70 steps, activation margins $\alpha = 15.0$, $\beta = 40.0$, $\gamma = 20.0$, activation ratio 0.88, and merge weight 1.0. All models use 2 knowledge shards and edit layer 29 (Llama-3), 27 (Mistral), or 21 (GPT-J-6B).

\paragraph{AlphaEdit.~\cite{fang2024alphaedit}} For Llama-3-8B-Instruct and Mistral-7B-v0.1, we edit critical layers [4, 5, 6, 7, 8] following~\cite{meng2022mass}. To compute hidden representations at each critical layer, we perform 25 optimization steps with learning rate $1 \times 10^{-1}$. For GPT-J-6B, we target layers [3, 4, 5, 6, 7, 8] with 25 optimization steps per layer and learning rate $5 \times 10^{-1}$. As in MEMIT, we use $\lambda = 15,000$ and 100,000 Wikipedia samples. The null-space threshold is $2 \times 10^{-2}$ for all models, and L2 regularization is set to 1 for Llama-3 and Mistral, and 10 for GPT-J-6B.

\paragraph{SimIE.~\cite{guo2025towards}} For Llama-3-8B-Instruct and Mistral-7B-v0.1 we use ROME as the base method with $\lambda = 10$ for the best performance as detailed in the original paper. The hyperparameters of ROME are as before.

%%%%%%%%%%%%%%%%%%%%%%%%%%%%%%%%%%%%%%%%%%%%%%%%%%%%%%%%%%%%%%%%%%%%%%%%%%%%%%%

\section{Experimental Details}
\label{app:expdetails}

\subsection{Datasets}

\paragraph{ZsRE.}
Zero-shot Relation Extraction (ZsRE)~\cite{levy2017zero} is a context-free question answering benchmark designed to evaluate factual knowledge storage and retrieval in language models. Each instance contains an edit prompt $x_e$ (e.g., \textit{``What was the date of Air France Flight 447?''}), a target answer $y_e$ (e.g., \textit{``1 June 2009''}), a semantically equivalent rephrase $x'_e$ (e.g., \textit{``What's the date of Air France Flight 447?''}) for generalization evaluation, and an unrelated locality prompt $x_{\text{loc}}$ (e.g., \textit{``where is the 7th game of the world series played''}) with its corresponding answer to assess whether edits affect irrelevant knowledge. We use the ZsRE splits provided by EasyEdit: 163,196 training samples, 1,037 edit samples, and 19,086 evaluation samples. For all ZsRE results reported in Table~\ref{tab:main}, we randomly sample 1,000 instances from the edit split.

\paragraph{SelfCheckGPT.}
We use the hallucination correction benchmark introduced by~\cite{hartvigsen2023aging}, constructed from the SelfCheckGPT dataset~\cite{manakul2023selfcheckgpt}. The original SelfCheckGPT work prompted GPT-3~\cite{brown2020language} to generate 238 Wikipedia-style biographies and annotated each sentence for factual accuracy. For model editing, hallucinated sentences are replaced with corresponding factually correct sentences from actual Wikipedia entries. Each instance contains a prompt $x_e$ consisting of GPT-3-generated text up to the hallucination (e.g., \textit{``This is a Wikipedia passage about ron meagher.''}) and a target $y_e$ containing the correct Wikipedia sentence (e.g., \textit{``Ron Meagher (born October 2, 1941, Oakland, California, USA) is best known as the bassist of the American rock band The Beau Brummels.''}). The dataset comprises 1,392 potential edits with substantially longer token sequences than standard QA benchmarks. Locality is evaluated using unrelated text continuations. Following~\cite{wang2024wise}, we use 600 test samples after filtering excessively long sequences.

\paragraph{Temporal.}
The Temporal dataset~\cite{hewitt2024model} evaluates out-of-distribution (OOD) generalization for emerging entities. Each instance contains a prefix $x_e$ extracted from an entity's Wikipedia page (e.g., \textit{``Luka Dončić''}), a GPT-4 generated description $y_e$ containing potentially noisy but informative content about the entity, and an actual Wikipedia suffix $y_{\text{ood}}$ for OOD evaluation. The dataset tests whether edits performed on formulaic examples generalize to complex natural text, where the distributional shift involves linguistic complexity rather than domain change. Locality samples are drawn from the Pile~\cite{gao2020pile}, the pretraining corpus of GPT-J.

\subsection{Metrics}

We evaluate editing methods using three standard metrics~\cite{yao2023editing, zhang2024comprehensive}:

\paragraph{Reliability (Rel.).}
Measures whether the edited model correctly produces the target output for edit prompts. For QA tasks, this is computed as the accuracy on edit samples:
\begin{align}
\text{Rel.} &= \frac{1}{\tau}\sum_{t=0}^{\tau} \mathbbm{1}\!\left(f_{\theta_t}(x_e^\top) = y_e^\top\right)\,.
\end{align}
Higher accuracy is better ($\uparrow$). For hallucination correction (SelfCheckGPT), we report perplexity (PPL) on the target text following~\cite{wang2024wise}; lower PPL is better ($\downarrow$).

\paragraph{Generalization (Gen.).}
Assesses whether edits transfer to semantically equivalent rephrasings of the edit prompt. When paraphrases $x'_{e}$ are available,
\begin{align}
\text{Gen.} &= \frac{1}{\tau}\sum_{t=0}^{\tau} \mathbbm{1}\!\left(f_{\theta_t}(x_{e'}^\top) = y_e^\top\right)\,.
\end{align}
Higher accuracy is better ($\uparrow$). This formulation applies where paraphrase prompts are supplied (ZsRE). For OOD evaluation on Temporal, we use a probability threshold where success is defined as $\mathcal{L}_{\theta_t}(x_e, y_{\text{ood}}) < -\log(0.8)$; higher success rate is better ($\uparrow$). For SelfCheckGPT we do not report Gen.\ because the dataset does not include paraphrase examples.

\paragraph{Locality (Loc.).}
Evaluates whether edits preserve the model's behavior on unrelated inputs:
\begin{align}
\text{Loc.} &= \frac{1}{\tau}\sum_{t=0}^{\tau} \mathbbm{1}\!\left(f_{\theta_t}(x_{\text{loc}}^\top) = f_{\theta_0}(x_{\text{loc}}^\top)\right)\,.
\end{align}
Higher agreement with the pretrained model's predictions on locality prompts is better ($\uparrow$). This metric is shared across all tasks.

\subsection{Competitors}

\paragraph{FT.~\cite{zhang2024comprehensive}} Fine-tunes a single MLP layer using the autoregressive loss while freezing all other parameters. It utilizes a cross-entropy loss on the target answer while masking the original text.

\paragraph{AdaLoRA.~\cite{zhang2023adaptive}} Freezes the base model and fits low-rank adapters with an adaptive rank budget across target modules. Training minimizes next-token cross-entropy on the new answer while masking the prompt region.

\paragraph{ROME.~\cite{meng2022locating}} Rank-One Model Editing locates factual associations in specific MLP layers via causal tracing, then modifies parameters through least-squares approximation to inject new knowledge.

\paragraph{MEMIT.~\cite{meng2022mass}} Mass-Editing Memory in a Transformer extends ROME to multiple layers simultaneously, enabling batch editing of thousands of facts in a single operation.

\paragraph{DEFER.~\cite{hartvigsen2023aging}} Re-implementation of SERAC~\cite{mitchell2022memory} for lifelong editing. Introduces a scope classifier $g$ that routes inputs to either the original model or an auxiliary one-layer counterfactual network $o$. Both networks are jointly fine-tuned during editing.

\paragraph{GRACE.~\cite{hartvigsen2023aging}} General Retrieval Adaptors for Continual Editing maintains a discrete key-value codebook without modifying model weights. During inference, input activations are matched against cached keys; if within a learned deferral radius $\epsilon$, the corresponding value replaces the layer output.

\paragraph{WISE.~\cite{wang2024wise}} Introduces a dual-memory architecture with a frozen main memory (original weights) and a trainable side memory for edits. A routing mechanism directs queries to the appropriate memory based on activation patterns. Knowledge sharding distributes edits across orthogonal parameter subspaces, later merged via TIES-Merge to prevent conflicts.

\paragraph{AlphaEdit.~\cite{fang2024alphaedit}} Extends locate-then-edit methods by projecting weight perturbations onto the null space of previously edited knowledge, ensuring new edits minimally interfere with prior updates.

\paragraph{SimIE.~\cite{guo2025towards}}  Formulates the ideal parameter shift as the minimum-norm solution to a linear system, constructed using the Moore-Penrose
inverse, and subsequently enables recursive updates by truncating the limiting expression of the Moore-Penrose inverse under two mild assumptions. 

\section{Extended Results}
\label{app:extended}As discussed in App.~\ref{app:expdetails}, results for smaller sequential edit scales ($\tau \leq 100$ for ZsRE, SelfCheckGPT and $\tau = 10$ for Temporal) are averaged over several runs with distinct edit samples to mitigate variance. In this section, we extend the results to include the standard deviation across multiple independent runs to provide a comprehensive view of the stability and reliability of each editing method. The following tables present these detailed metrics all three datasets ZsRE in \cref{tab:main_sd}, SelfCheckGPT in \cref{tab:hallucination_sd}, and Temporal in \cref{tab:ood_generalization_sd}.

% The results across both tasks demonstrate that LOKI maintains high Reliability (Rel.) and competitive generalization performance as the number of sequential edits $\tau$ increases. For ZsRE (\cref{tab:main_sd}
% In the hallucination correction task (Table~\ref{tab:hallucination_sd}), LOKI consistently outperforms traditional methods like ROME and MEMIT, which exhibit catastrophic reliability failure at higher edit scales, particularly on Mistral-7B. While memory-based methods like GRACE achieve perfect Locality, they often suffer from significantly lower Reliability compared to LOKI. In the OOD generalization task (Table~\ref{tab:ood_generalization_sd}), LOKI displays robust OOD Generalization accuracy at $\tau=75$ on the Temporal dataset, whereas other methods like MEMIT experience a sharp decline in rewrite accuracy and generalization. Although LOKI exhibits a minor trade-off in Locality compared to WISE and AlphaEdit, its overall balance across the three metrics—Reliability, Generalization, and Locality—results in superior stability for long-term sequential editing.

\begin{table*}[!t]
%\caption{Results of editing two LLMs on the ZsRE dataset, with $\tau$ being the number of sequential edits. Avg represents the average of all three metrics. Higher is better for all metrics. LOKI outperforms existing methods consistently, especially so with a larger number of edits.}
\caption{Sequential model editing results on the ZsRE dataset (Llama-3-8B-Instruct and Mistral-7B). Metrics include Reliability (Rel. $\uparrow$), Generalization (Gen. $\uparrow$), and Locality (Loc. $\uparrow$) across $\tau$ sequential edits, with Avg. representing the mean of the three scores. \textbf{Bold} indicates the best result and underlining indicates the second-best result for each metric. LOKI consistently achieves the highest average performance across all edit scales, demonstrating superior scalability and stability compared to existing methods as $\tau$ increases up to 1,000.}
\centering
\scalebox{.45}{
\label{tab:main_sd}
\begin{tabular}{c|ccc|c|ccc|c|ccc|c|ccc|c}
\toprule
Method    & \multicolumn{4}{c|}{$\tau=$ 1}              & \multicolumn{4}{c|}{$\tau=$ 10}               & \multicolumn{4}{c|}{$\tau=$ 100}              & \multicolumn{4}{c}{$\tau=$ 1000}             \\
\midrule
          & Rel. ($\uparrow$)  & Gen. ($\uparrow$)    & Loc. ($\uparrow$)    & Avg. ($\uparrow$)         & Rel. ($\uparrow$)    & Gen. ($\uparrow$)    & Loc. ($\uparrow$)    & Avg. ($\uparrow$)         & Rel. ($\uparrow$)    & Gen. ($\uparrow$)    & Loc. ($\uparrow$)    & Avg. ($\uparrow$)         & Rel. ($\uparrow$)    & Gen. ($\uparrow$)    & Loc. ($\uparrow$)    & Avg. ($\uparrow$)         \\
          \midrule
\multicolumn{17}{c}{Llama-3-8B-Instruct}                                                                                                                                             \\
\midrule
FT        & \textbf{1}$_{\pm 0.00}$ & \textbf{0.99}$_{\pm 0.05}$  & 0.95$_{\pm 0.14}$   & \underline{0.98} $_{\pm 0.09}$       & 0.87$_{\pm 0.08}$   & 0.83$_{\pm 0.09}$   & 0.80$_{\pm 0.10}$   & 0.83$_{\pm 0.05}$        & 0.84$_{\pm 0.04}$   & 0.81 $_{\pm 0.05}$  & 0.42$_{\pm 0.06}$   & 0.69 $_{\pm 0.03}$       & 0.83   & 0.80   & 0.11   & 0.58        \\
%LoRA      & \textbf{1}$_{\pm 0.00}$ & 0.99$_{\pm 7.45}$  & 0.49$_{\pm 33.22}$   & 0.83 $_{\pm 11.42}$       & 0.76$_{\pm 14.54}$   & 0.72$_{\pm 13.90}$   & 0.32$_{\pm 15.00}$   & 0.60$_{\pm 11.88}$       & 0.47$_{\pm 7.15}$   & 0.45$_{\pm 7.32}$   & 0.30$_{\pm 8.49}$   & 0.41$_{\pm 5.62}$       & 0.36   & 0.35   & 0.20   & 0.31        \\
AdaLoRA   & \textbf{1}$_{\pm 0.00}$ & 0.73$_{\pm 0.30}$ & 0.82$_{\pm 0.24}$   & 0.85$_{\pm 0.13}$       & 0.83$_{\pm 0.08}$    & 0.60$_{\pm 0.11}$   & 0.57$_{\pm 0.14}$   & 0.67$_{\pm 0.07}$        & 0.54$_{\pm 0.04}$   & 0.49$_{\pm 0.04}$   & 0.44$_{\pm 0.07}$   & 0.49$_{\pm 0.04}$       & 0.36   & 0.35   & 0.26   & 0.32        \\
ROME      & \underline{0.99}$_{\pm0.03}$  & 0.96$_{\pm0.02}$  & 0.96$_{\pm0.01}$   & 0.97$_{\pm0.02}$        & 0.61$_{\pm0.11}$   & 0.59$_{\pm0.11}$   & 0.41$_{\pm0.20}$   & 0.54$_{\pm0.08}$        & 0.08$_{\pm0.07}$   & 0.08$_{\pm0.07}$   & 0.02$_{\pm0.01}$   & 0.06$_{\pm0.03}$        & 0.03   & 0.03   & 0.02   & 0.03        \\
MEMIT     & \underline{0.99}$_{\pm0.02}$  & 0.97$_{\pm0.02}$   & 0.98$_{\pm0.05}$   & 0.98$_{\pm0.04}$        & 0.68$_{\pm0.11}$   & 0.67$_{\pm0.11}$   & 0.73$_{\pm0.09}$   & 0.69$_{\pm0.06}$        & 0.03$_{\pm0.02}$   & 0.03$_{\pm0.01}$   & 0.01$_{\pm0.01}$   & 0.02$_{\pm0.03}$        & 0.00      & 0.00      & 0.00      & 0.00           \\
DEFER     & 0.82$_{\pm0.24}$ & 0.86$_{\pm0.22}$   & 0.68$_{\pm0.31}$   & 0.79$_{\pm0.15}$        & 0.66$_{\pm0.13}$   & 0.67$_{\pm0.13}$   & 0.45$_{\pm0.17}$   & 0.59$_{\pm0.08}$        & 0.33$_{\pm0.02}$   & 0.32$_{\pm0.02}$   & \textbf{1}$_{\pm 0.00}$      & 0.55$_{\pm0.01}$        & 0.31   & 0.31   & \textbf{1}      & 0.54        \\
GRACE     & \textbf{1}$_{\pm 0.00}$    & 0.46$_{\pm0.02}$   & \textbf{1}$_{\pm 0.00}$      & 0.82$_{\pm 0.00}$        & \textbf{1}$_{\pm 0.00}$      & 0.42$_{\pm0.15}$   & \textbf{1}$_{\pm 0.00}$     & 0.81$_{\pm0.03}$        & \textbf{1}$_{\pm 0.00}$      & 0.39$_{\pm0.07}$   & \textbf{1}$_{\pm 0.00}$     & 0.8$_{\pm0.02}$         & \textbf{1}      & 0.37   & \textbf{1}     & \underline{0.79}        \\
SimIE & 0.75$_{\pm0.28}$ & 0.71$_{\pm0.29}$ & 0.79$_{\pm0.26}$ & 0.75$_{\pm0.16}$ & 0.73$_{\pm0.09}$ & 0.69$_{\pm0.10}$ & 0.79$_{\pm0.09}$ & 0.73$_{\pm0.05}$ & 0.72$_{\pm0.02}$ & 0.67$_{\pm0.02}$ & 0.78$_{\pm0.04}$ & 0.72$_{\pm0.02}$ & 0.69 & 0.65 & 0.75 & 0.69\\
WISE      & \underline{0.99} $_{\pm 0.08}$ & \textbf{0.99} $_{\pm 0.08}$  & 0.92$_{\pm 0.24}$      & 0.97 $_{\pm 0.09}$       & 0.77$_{\pm 0.11}$   & 0.74$_{\pm 0.11}$   & \textbf{1} $_{\pm 0.00}$  & 0.84 $_{\pm 0.05}$       & 0.61 $_{\pm 0.05}$  & 0.6  $_{\pm 0.05}$  & \textbf{1} $_{\pm 0.00}$     & 0.73      $_{\pm 0.02}$  & 0.54   & 0.53   & 0.99      & 0.69        \\
AlphaEdit & \underline{0.99} $_{\pm 0.05}$ & 0.88 $_{\pm 0.25}$  & \underline{0.99}$_{\pm 0.04}$      & 0.95$_{\pm 0.09}$        & 0.92 $_{\pm 0.07}$  & \underline{0.84} $_{\pm 0.10}$  & \underline{0.98} $_{\pm 0.04}$  & \underline{0.92} $_{\pm 0.09}$       & \underline{0.90} $_{\pm 0.02}$  &\underline{0.84}$_{\pm 0.03}$   & 0.94 $_{\pm 0.04}$  & \underline{0.90}  $_{\pm 0.02}$      & 0.76   & \textbf{0.88}   & 0.65   & 0.76        \\
\midrule
\textbf{LOKI} (Ours)     & \textbf{1}$_{\pm 0.00}$    & \underline{0.98}$_{\pm 0.08}$  & \underline{0.99} $_{\pm 0.03}$ & \textbf{0.99}$_{\pm 0.06}$ & \underline{0.94} $_{\pm 0.04}$ & \textbf{0.87} $_{\pm 0.08}$ & \underline{0.98}$_{\pm 0.02}$ & \textbf{0.93}$_{\pm 0.03}$ & \textbf{0.93} $_{\pm 0.02}$ & \textbf{0.86} $_{\pm 0.03}$& \underline{0.96}$_{\pm 0.02}$ & \textbf{0.92} $_{\pm 0.01}$     & \underline{0.91} & \underline{0.87}  & \underline{0.88} & \textbf{0.89} \\
\midrule
\multicolumn{17}{c}{Mistral-7B}                                                                                                                                             \\
\midrule
FT        & \underline{0.99} $_{\pm 0.03}$ & \textbf{0.98} $_{\pm 0.06}$   & 0.75$_{\pm 0.29}$    & 0.91  $_{\pm 0.10}$       &  0.82 $_{\pm 0.12}$    & 0.77  $_{\pm 0.12}$   & 0.22 $_{\pm 0.18}$         & 0.60 $_{\pm 0.08}$   & 0.83  $_{\pm 0.03}$  & \underline{0.77}  $_{\pm 0.04}$   & 0.07 $_{\pm 0.02}$   & 0.56  $_{\pm 0.02}$       & 0.69   & 0.65   & 0.07   & 0.47         \\
%LoRA      & \textbf{1}$_{\pm 0.00}$ & 1$_{\pm 0.00}$   & 0.31$_{\pm 0.30}$    & 0.77 $_{\pm 0.10}$       & 0.63$_{\pm 18.11}$   & 0.60$_{\pm 17.34}$   & 0.21$_{\pm 16.07}$   & 0.48$_{\pm 14.77}$       & 0.33$_{\pm 8.66}$   & 0.32$_{\pm 7.42}$   & 0.17$_{\pm 10.61}$ & 0.28$_{\pm 7.79}$       & 0.15   & 0.15   & 0.07   & 0.12         \\
AdaLoRA   & \textbf{1}$_{\pm 0.00}$ & 0.91$_{\pm 0.18}$   & 0.84$_{\pm 0.22}$    & 0.92 $_{\pm 0.09}$        & 0.91$_{\pm 0.10}$    & 0.78$_{\pm 0.11}$   & 0.62$_{\pm 0.09}$    & 0.77$_{\pm 0.08}$        & 0.56$_{\pm 0.09}$   & 0.51$_{\pm 0.07}$   & 0.43$_{\pm 0.06}$  & 0.50$_{\pm 0.06}$       & 0.50   & 0.48   & 0.40   & 0.46         \\
ROME      & 0.79$_{\pm 0.23}$ & 0.77$_{\pm 0.23}$   & 0.98$_{\pm 0.05}$   & 0.85$_{\pm 0.11}$        & 0.58$_{\pm 0.12}$   & 0.57$_{\pm 0.12}$   & 0.75$_{\pm 0.19}$   & 0.63$_{\pm 0.08}$        & 0.05$_{\pm 0.03}$   & 0.05$_{\pm 0.03}$   & 0.02$_{\pm 0.02}$   & 0.04$_{\pm 0.01}$        & 0.04   & 0.04   & 0.02   & 0.03        \\
MEMIT     & 0.81$_{\pm 0.23}$ & 0.79$_{\pm 0.23}$   & \underline{0.99}$_{\pm 0.04}$   & 0.86$_{\pm 0.11}$        & 0.46$_{\pm 0.13}$   & 0.45$_{\pm 0.13}$   & 0.61$_{\pm 0.26}$   & 0.51$_{\pm 0.10}$        & 0.00 $_{\pm 0.00}$     & 0$_{\pm 0.00}$      & 0.01$_{\pm 0.00}$   & 0$_{\pm 0.00}$           & 0.04   & 0.04   & 0.02   & 0.03        \\
DEFER     & 0.59$_{\pm 0.27}$ & 0.68$_{\pm 0.29}$   & 0.79$_{\pm 0.34}$   & 0.69$_{\pm 0.17}$        & 0.44$_{\pm 0.08}$   & 0.43$_{\pm 0.07}$   & \textbf{1}$_{\pm 0.00}$      & 0.62$_{\pm 0.04}$        & 0.4$_{\pm 0.02}$    & 0.4$_{\pm 0.02}$    & \textbf{1}$_{\pm 0.00}$      & 0.6$_{\pm 0.01}$         & 0.4    & 0.39   & \textbf{1}      & 0.6         \\
GRACE     & \textbf{1}$_{\pm 0.00}$    & 0.36$_{\pm 0.02}$   & \textbf{1}$_{\pm 0.00}$      & 0.79$_{\pm 0.01}$        & \textbf{1}$_{\pm 0.00}$      & 0.15$_{\pm 0.04}$   & \textbf{1}$_{\pm 0.00}$      & 0.72$_{\pm 0.02}$        & \textbf{1}$_{\pm 0.00}$      & 0.15$_{\pm 0.01}$   & \textbf{1}$_{\pm 0.00}$      & 0.72$_{\pm 0.00}$        & \textbf{1}      & 0.02   & \textbf{1}      & 0.67        \\
SimIE & 0.62$_{\pm0.29}$ & 0.59$_{\pm0.29}$ & 0.94$_{\pm0.13}$ & 0.72$_{\pm0.14}$ & 0.61$_{\pm0.09}$& 0.58$_{\pm0.09}$ & 0.94$_{\pm0.04}$ &0.71$_{\pm0.05}$ & 0.60$_{\pm0.02}$ & 0.58$_{\pm0.02}$ & 0.93$_{\pm0.02}$ & 0.70$_{\pm0.01}$ & 0.59 & 0.56 & 0.92 & 0.69\\
WISE      & 0.96 $_{\pm 0.16}$  & 0.93  $_{\pm 0.21}$  & \textbf{1} $_{\pm 0.00}$      & \underline{0.96}    $_{\pm 0.09}$     & 0.88 $_{\pm 0.07}$    & \underline{0.84} $_{\pm 0.09}$   & \underline{0.99}    $_{\pm 0.00}$    & \underline{0.91} $_{\pm 0.04}$        & 0.82 $_{\pm 0.06}$   & 0.76$_{\pm 0.05}$     & \underline{0.99}   $_{\pm 0.00}$     & \underline{0.86}  $_{\pm 0.03}$       & 0.65    & 0.63   & \underline{0.99}     & \underline{0.76}       \\
AlphaEdit & 0.86$_{\pm 0.17}$ & 0.78 $_{\pm 0.22}$  & \underline{0.99}  $_{\pm 0.01}$      & 0.88 $_{\pm 0.09}$        & 0.82  $_{\pm 0.07}$  & 0.76  $_{\pm 0.08}$  & \underline{0.99}   $_{\pm 0.01}$   & 0.86  $_{\pm 0.04}$       & 0.80  $_{\pm 0.03}$   & 0.75   $_{\pm 0.04}$  & 0.96  $_{\pm 0.02}$ & 0.84 $_{\pm 0.02}$        & 0.80   & \underline{0.74}   & 0.75   & \underline{0.76}        \\
\midrule
\textbf{LOKI} (Ours)      & \textbf{1}$_{\pm 0.00}$    & \underline{0.96}$_{\pm 0.11}$& 0.98 $_{\pm 0.05}$& \textbf{0.98}  $_{\pm 0.06}$    & \underline{0.96}$_{\pm 0.04}$  & \textbf{0.91} $_{\pm 0.08}$ & 0.97 $_{\pm 0.02}$ & \textbf{0.95} $_{\pm 0.03}$     & \underline{0.95} $_{\pm 0.00}$ & \textbf{0.88}$_{\pm 0.03}$  & 0.93$_{\pm 0.02}$ & \textbf{0.92}$_{\pm 0.01}$ & \underline{0.92} & \textbf{0.86} & 0.82 & \textbf{0.87}\\
\bottomrule
\end{tabular}
}
\end{table*}

\begin{table*}[h!]
\caption{Performance comparison on the SelfCheckGPT hallucination correction task with standard deviations. Metrics include Reliability (Rel. $\downarrow$) and Locality (Loc. $\uparrow$) across $\tau$ sequential edits. For this task, Rel. is measured as perplexity (PPL) of the target text. Generalization is not available in this dataset since no rephrasing examples are provided. \textbf{Bold} indicates the best result and underlining indicates the second-best result for each metric. LOKI demonstrates superior Reliability throughout the editing process across both Llama-3-8B and Mistral-7B. Overall, it maintains competitive performance and avoids the reliability collapse observed in methods at higher edit scales.}
\centering
\scalebox{.74}{
\label{tab:hallucination_sd}
\begin{tabular}{l|cc|cc|cc|cc}
\toprule
Method & \multicolumn{2}{c|}{$\tau=1$} & \multicolumn{2}{c|}{$\tau=10$} & \multicolumn{2}{c|}{$\tau=100$} & \multicolumn{2}{c}{$\tau=600$} \\
 & Rel. ($\downarrow$) & Loc. ($\uparrow$) & Rel. ($\downarrow$) & Loc. ($\uparrow$) & Rel. ($\downarrow$) & Loc. ($\uparrow$) & Rel. ($\downarrow$) & Loc. ($\uparrow$) \\
\midrule
\multicolumn{9}{c}{Llama-3-8B-Instruct} \\
\midrule
FT        & \textbf{1.01}$_{\pm 0.00}$ & 0.95$_{\pm 0.07}$ & \underline{1.12}$_{\pm 0.10}$ & 0.84$_{\pm 0.04}$ & \underline{1.44}$_{\pm 0.12}$ & 0.65$_{\pm 0.02}$ & \underline{2.02} & 0.45 \\
ROME      & 1.39$_{\pm 1.05}$ & 0.97$_{\pm 0.04}$ & 4.74e1$_{\pm 175}$ & 0.61$_{\pm 0.09}$ & 5.00e4$_{\pm 7.07e4}$ & 0.02$_{\pm 0.01}$ & 2.89e4 & 0.01 \\
MEMIT     & 1.32$_{\pm 0.52}$ & \underline{0.99}$_{\pm 0.02}$ & 1.84e3$_{\pm 1.36e4}$ & 0.93$_{\pm 0.11}$ & 6.25e10$_{\pm 1.40e11}$ & 0.01$_{\pm 0.01}$ & 3.75e4 & 0.05 \\
DEFER     & 2.34e1$_{\pm 1.25e2}$ & 0.80$_{\pm 0.15}$ & 6.86e1$_{\pm 335}$ & \textbf{1.00}$_{\pm 0.01}$ & 7.12e1$_{\pm 95.9}$ & \textbf{1.00}$_{\pm 0.00}$ & 7.12e1 & \textbf{1.00} \\
GRACE     & 1.40$_{\pm 1.98}$ & \textbf{1.00}$_{\pm 0.00}$ & 7.10e1$_{\pm 335}$ & \textbf{1.00}$_{\pm 0.00}$ & 7.17e1$_{\pm 95.4}$ & \textbf{1.00}$_{\pm 0.00}$ & 7.54e1 & \textbf{1.00} \\
WISE      & 1.44$_{\pm 3.18}$ & 0.97$_{\pm 0.10}$ & 3.47$_{\pm 12.2}$ & 0.97$_{\pm 0.04}$ & 3.54$_{\pm 0.21}$ & \underline{0.99}$_{\pm 0.00}$ & 2.63e2 & \underline{0.99} \\
AlphaEdit & \underline{1.29}$_{\pm 1.04}$ & \textbf{1.00}$_{\pm 0.01}$ & 1.49$_{\pm 0.66}$ & \underline{0.99}$_{\pm 0.01}$ & 5.04$_{\pm 0.97}$ & 0.96$_{\pm 0.01}$ & 9.04e3 & 0.05 \\
\midrule
%\textbf{LOKI} (Ours) & \textbf{1.01}$_{\pm 0.00}$ & 0.98$_{\pm 0.02}$ & \textbf{1.01}$_{\pm 0.00}$ & 0.97$_{\pm 0.01}$ & \textbf{1.05}$_{\pm 0.02}$ & 0.89$_{\pm 0.03}$ & \textbf{1.16} & 0.78 \\
\textbf{LOKI} (Ours) & \textbf{1.01}$_{\pm 0.00}$ & 0.98$_{\pm 0.02}$ & \textbf{1.01}$_{\pm 0.01}$ & 0.98$_{\pm 0.01}$ & \textbf{1.03}$_{\pm 0.02}$ & 0.94$_{\pm 0.02}$ & \textbf{1.14} & 0.85 \\
\midrule
\multicolumn{9}{c}{Mistral-7B} \\
\midrule
FT        & \underline{1.02}$_{\pm 0.07}$ & 0.77$_{\pm 0.12}$ & \underline{1.41}$_{\pm 0.36}$ & 0.44$_{\pm 0.06}$ & 1.94$_{\pm 0.34}$ & 0.22$_{\pm 0.01}$ & \underline{7.26} & 0.20 \\
ROME      & 1.98$_{\pm 1.20}$ & \underline{0.99}$_{\pm 0.02}$ & 2.58$_{\pm 1.34}$ & 0.91$_{\pm 0.02}$ & 6.77e3$_{\pm 3.80e3}$ & 0.03$_{\pm 0.00}$ & 7.87e3 & 0.01 \\
MEMIT     & 1.59$_{\pm 1.15}$ & \textbf{1.00}$_{\pm 0.01}$ & 3.13e2$_{\pm 1.49e3}$ & 0.93$_{\pm 0.20}$ & 9.03e4$_{\pm 1.31e5}$ & 0$_{\pm 0.01}$ & 3.33e6 & 0.00 \\
DEFER     & 1.76e1$_{\pm 116}$ & 0.92$_{\pm 0.12}$ & 3.22e1$_{\pm 162}$ & \underline{0.99}$_{\pm 0.02}$ & 3.22e1$_{\pm 47.1}$ & \textbf{1.00}$_{\pm 0.00}$ & 3.22e1 & \textbf{1.00} \\
GRACE     & 1.92$_{\pm 2.04}$ & \textbf{1.00}$_{\pm 0.00}$ & 3.21e1$_{\pm 162}$ & \textbf{1.00}$_{\pm 0.00}$ & 3.22e1$_{\pm 47.2}$ & \textbf{1.00}$_{\pm 0.00}$ & 3.25e1 & \textbf{1.00} \\
WISE      & 1.08$_{\pm 0.66}$ & \textbf{1.00}$_{\pm 0.02}$ & 1.43e1$_{\pm 91.5}$ & 0.98$_{\pm 0.05}$ & \underline{1.74}$_{\pm 0.29}$ & 0.96$_{\pm 0.01}$ & 10.0 & \underline{0.92} \\
AlphaEdit & 1.71$_{\pm 1.50}$ & \textbf{1.00}$_{\pm 0.01}$ & 1.88$_{\pm 0.77}$ & \underline{0.99}$_{\pm 0.00}$ & 2.96$_{\pm 0.58}$ & \underline{0.98}$_{\pm 0.00}$ & 6.43e1 & 0.87 \\
\midrule
%\textbf{LOKI} (Ours) & \textbf{1.01}$_{\pm 0.00}$ & \underline{0.99}$_{\pm 0.02}$ & \textbf{1.02}$_{\pm 0.01}$ & 0.96$_{\pm 0.01}$ & \textbf{1.08}$_{\pm 0.01}$ & 0.87$_{\pm 0.01}$ & \textbf{1.43} & 0.72 \\
\textbf{LOKI} (Ours) & \textbf{1.01}$_{\pm 0.00}$ & \underline{0.99}$_{\pm 0.02}$ & \textbf{1.01}$_{\pm 0.01}$ & \underline{0.99}$_{\pm 0.01}$ & \textbf{1.04}$_{\pm 0.01}$ & 0.94$_{\pm 0.01}$ & \textbf{1.13} & 0.87 \\
\bottomrule
\end{tabular}
}
\end{table*}

\begin{table}[h!]
%\scriptsize
\caption{OOD generalization performance on the Temporal dataset (GPT-J-6B) with standard deviations. Metrics include Reliability (Rel. $\uparrow$), OOD Generalization (OOD Gen. $\uparrow$), and Locality (Loc. $\uparrow$) across $\tau$ sequential edits, with Avg. denoting their mean. \textbf{Bold} indicates the best result and underlining indicates the second-best result for each metric. LOKI maintains high Reliability and OOD Generalization across edit scales. While exhibiting lower Locality compared to specific memory-based methods, it prevents the significant reliability collapse observed in methods like ROME and MEMIT as $\tau$ increases.}
\centering
\scalebox{.80}{
\begin{tabular}{l|cccc|cccc}
\toprule
Method & \multicolumn{4}{c|}{$\tau = 10$} & \multicolumn{4}{c}{$\tau = 75$} \\
\cmidrule(lr){2-5} \cmidrule(lr){6-9}
& Rel. ($\uparrow$) & OOD Gen. ($\uparrow$) & Loc. ($\uparrow$) & Avg. ($\uparrow$) & Rel. ($\uparrow$) & OOD Gen. ($\uparrow$) & Loc. ($\uparrow$) & Avg. ($\uparrow$) \\
\midrule
w/o Editing & 0.56 & 0.21 & --- & --- & 0.56 & 0.21 & --- & --- \\
\midrule
FT & 0.96$_{\pm 0.02}$ & \underline{0.35}$_{\pm 0.03}$ & 0.69$_{\pm 0.02}$ & 0.67$_{\pm 0.01}$ & 0.93 & 0.32 & 0.62 & 0.62 \\
ROME & 0.10$_{\pm 0.04}$ & 0$_{\pm 0.00}$ & 0.06$_{\pm 0.02}$ & 0.05$_{\pm 0.01}$ & 0.02 & 0.00 & 0.02 & 0.01 \\
MEMIT & 0.82$_{\pm 0.05}$ & 0.28$_{\pm 0.02}$ & \underline{0.99}$_{\pm 0.01}$ & 0.70$_{\pm 0.02}$ & 0.43 & 0.14 & 0.49 & 0.35 \\
DEFER & 0.66$_{\pm 0.02}$ & 0.29$_{\pm 0.02}$ & 0.81$_{\pm 0.05}$ & 0.59$_{\pm 0.02}$ & 0.55 & 0.21 & 0.95 & 0.57 \\
GRACE & 0.59$_{\pm 0.02}$ & 0.22$_{\pm 0.02}$ & \textbf{1}$_{\pm 0.00}$ & 0.60$_{\pm 0.01}$ & 0.56 & 0.22 & \textbf{1} & 0.59 \\
WISE & \underline{0.99}$_{\pm 0.01}$ & \textbf{0.36}$_{\pm 0.02}$ & 0.93$_{\pm 0.06}$ & \textbf{0.76}$_{\pm 0.02}$ & \textbf{0.98} & \textbf{0.38} & \underline{0.99} & \textbf{0.78} \\
AlphaEdit & 0.90$_{\pm 0.03}$ & 0.27$_{\pm 0.02}$ & \underline{0.99}$_{\pm 0.01}$ & 0.72$_{\pm 0.01}$ & \underline{0.87} & 0.28 & 0.97 & 0.70 \\
%MEMOIR & 0.99 & 0.37 & 1 & 0.79 & 0.99 & 0.38 & 1 & 0.79 \\
\midrule
%\rowcolor[gray]{0.9} 
%\textbf{LOKI} (Ours) & \textbf{1}$_{\pm 0.00}$ & \textbf{0.36}$_{\pm 0.02}$ & 0.82$_{\pm 0.06}$ & \underline{0.72} & \textbf{0.99} & \underline{0.35} & 0.66 & 0.67 \\
\textbf{LOKI} (Ours) & \textbf{1}$_{\pm 0.00}$ & \textbf{0.36}$_{\pm 0.02}$ & 0.82$_{\pm 0.06}$ & \underline{0.73}$_{\pm 0.02}$ & \textbf{0.98} & \underline{0.36} & 0.81 & \underline{0.72} \\
\bottomrule
\end{tabular}
}
\label{tab:ood_generalization_sd}
\end{table}

\section{Combining AlphaEdit with LOKI layer selection}

\label{app:combining}

To further isolate the effect of our layer selection algorithm, we perform an experiment combining the LOKI dynamic layer selection algorithm, with the knowledge insertion algorithm of AlphaEdit. Meaning, for each sample, first the editing layers are selected using our HSIC-based dynamic layer selection as described in \cref{sec:layersel}, and then knowledge insertion is performed using the same method as used in AlphaEdit. The results are presented in \cref{tab:alphaloki}, and show that while across smaller $\tau$ the results are not much different, substantial improvements are observed for $\tau=1000$, the most challenging scenario. This is while both settings still underperform LOKI in most metrics. We hypothesize that while the layer selection does improve AlphaEdit, its gains are limited by AlphaEdits two step optimization/update scheme, as well as it's requirement to store features of previous edits for each layer in order to avoid catastrophic forgetting. LOKI does not suffer from these limitations.

\begin{table*}[t!]
%\caption{Results of editing two LLMs on the ZsRE dataset, with $\tau$ being the number of sequential edits. Avg represents the average of all three metrics. Higher is better for all metrics. LOKI outperforms existing methods consistently, especially so with a larger number of edits.}
\caption{Results on the ZsRE dataset (Llama-3-8B) for combining AlphaEdit with the layer selection algorithm of LOKI. Metrics include Reliability (Rel.~$\uparrow$), Generalization (Gen.~$\uparrow$), and Locality (Loc.~$\uparrow$) across $\tau$ sequential edits, with Avg. representing the mean of the three scores. \textbf{Bold} is the best result. Using the LOKI layer selection improves AlphaEdit performance substantially for $\tau=1000$.}% as $\tau$ increases up to 1,000.}
\centering
\label{tab:alphaloki}
\resizebox{0.90\linewidth}{!}{
\begin{tabular}{c|ccc|c|ccc|c|ccc|c|ccc|c}
\toprule
Method    & \multicolumn{4}{c|}{$\tau=$ 1}              & \multicolumn{4}{c|}{$\tau=$ 10}               & \multicolumn{4}{c|}{$\tau=$ 100}              & \multicolumn{4}{c}{$\tau=$ 1000}             \\
\midrule
          & Rel.  & Gen.    & Loc.    & Avg.         & Rel.    & Gen.    & Loc.    & Avg.         & Rel.    & Gen.    & Loc.    & Avg.         & Rel.    & Gen.    & Loc.    & Avg.         \\
          \midrule
% \multicolumn{17}{c}{Llama 3-8B-Instruct}                         \\
\midrule

AlphaEdit & \textbf{0.99} & \textbf{0.88}   & \textbf{0.99}      & \textbf{0.95}        & 0.92   & {0.84}   & \textbf{0.98}   & {0.92}        & {0.90 }  &{\textbf{0.84}}   & \textbf{0.94}   & \textbf{0.90}        & 0.76   & \textbf{0.88}   & 0.65   & {0.76}        \\
\midrule
AlphaEdit + LOKI layer selection      & \textbf{0.99}    & {0.86} & {\textbf{0.99}}  & \textbf{0.95} & \textbf{{0.94}} & \textbf{0.85} & {\textbf{0.98}} & \textbf{0.93} & \textbf{0.91} & {0.82}& {0.93} & {0.89}      & {\textbf{0.94}} & {0.80}  & {\textbf{0.78}} & \textbf{0.84} \\

\bottomrule
\end{tabular}
}
\end{table*}

\section{Limitations, Future Works and Broader Impacts}
\label{app:limit}

\paragraph{Limitations} LOKI attempts to solve the shortcomings of previous knowledge editing methods, such as requiring historical data or inadequate selection of editing parameters. While we achieve competitive performance without historical data and by performing dynamic per-sample layer selection, LOKI has its own limitations. Firstly, computing the SVD of large weight matrices is time-consuming. While LOKI is still faster than WISE and AlphaEdit, it still suffers from some computational overhead.
Second, while LOKI is the first to perform dynamic layer selection for knowledge editing, it is constrained to choosing entire layers for editing rather than individual parameters or neurons. Overall we believe the benefits of LOKI, especially bypassing the requirement of historical data and time-consuming pre-processing or additional parameters, outweigh its limitations. We believe LOKI is a first step towards elevating lifelong knowledge editing to new heights.

\paragraph{Future Works} LOKI takes an important step towards efficient and sustainable memory-free lifelong knowledge editing. However, there is still room for improvement. Firstly, as mentioned in the limitations section, one can devise an algorithm to dynamically select individual parameters or neurons rather than entire layers, giving more flexibility to the algorithm for more precise knowledge insertion. Second, the weight null-space can be further constrained using heuristics or additional theorems to provide further decoupling from past knowledge, improving locality.

\paragraph{Broader Impacts} In this work, we present LOKI, a novel knowledge editing method that aims to facilitate the incorporation of new knowledge into LLMs. As LLMs see widespread use in society, knowledge editing in LLMs can be beneficial in a variety of aspects: fixing model mistakes~\cite{balachandran2022correcting}, correcting hallucinations~\cite{ji2023survey}, or removing bias~\cite{ferrara2023should}. However, as with any algorithm capable of modifying LLM behaviors, knowledge editing methods can be misused. For example, models can be edited to encourage toxic behavior, generate misinformation, or use inappropriate language. These risks are not exclusive to LOKI, and we believe the behavior and usage of LLMs should be sufficiently regulated to prevent misuse, including with knowledge editing. Overall, we believe that the benefits of LOKI are substantial enough to outweigh potential risks and that LOKI does not raise any new or exclusive ethical concerns.

%%%%%%%%%%%%%%%%%%%%%%%%%%%%%%%%%%%%%%%%%%%%%%%%%%%%%%%%%%%%

% \newpage
% \input{checklist.tex}

\end{document}